\documentclass{article} % For LaTeX2e
\usepackage{iclr2027_conference_arxiv,times}
\usepackage[T1]{fontenc}

\usepackage{amsmath,amsfonts,bm}

\def\eqref#1{equation~\ref{#1}}
\def\1{\bm{1}}

\newcommand{\test}{\mathcal{D_{\mathrm{test}}}}

\DeclareMathAlphabet{\mathsfit}{\encodingdefault}{\sfdefault}{m}{sl}
\SetMathAlphabet{\mathsfit}{bold}{\encodingdefault}{\sfdefault}{bx}{n}

\usepackage{hyperref}
\usepackage{url}
\makeatletter
\let\mycelium@citationlink\NAT@hyper@
\renewcommand{\NAT@hyper@}[1]{\mbox{\mycelium@citationlink{#1}}}
\makeatother
\usepackage{booktabs}
\usepackage{amsmath,amsthm}
\newtheorem{proposition}{Proposition}
\usepackage[ruled,vlined,linesnumbered]{algorithm2e}
\usepackage{cleveref}
\usepackage{booktabs,multirow}
\usepackage{graphicx}
\usepackage{tikz}
\usepackage{booktabs,tabularx}
\usepackage{multirow}
\usepackage{wrapfig}
\usepackage{longtable,needspace}

\title{Mycelium: A Generalizable Cross-Grid Multi-Task Model for Electrical Distribution Systems}

\author{%
Zhengyang Wei\textsuperscript{1,2}\quad
Shourya Bose\textsuperscript{2}\quad
Helgi Hilmarsson\textsuperscript{2}\quad
Elena Carnio\textsuperscript{2}\quad
Dhruv Suri\textsuperscript{2}\\
{\normalfont\textsuperscript{1}Stanford University\qquad\textsuperscript{2}Prav\={a}h}\\
{\normalfont\small\texttt{zywei@stanford.edu}}\\
{\normalfont\small\texttt{\{shourya.bose,helgi.hilmarsson,elena.carnio,dhruv.suri\}@pravah.com}}
}

\iclrfinalcopy % Show the author block in the arXiv version.
\begin{document}

\maketitle

\begin{abstract}
Electrical distribution grid operations require inference across heterogeneous networks from sparse, noisy, and incomplete time series measurements. In this work, we identify challenges and explore solutions towards a unified model that can perform diverse tasks grounded in the physics of the electric grid and generalize to unseen distribution networks.
We define a unified grid ontology that represents variable sized distribution networks as heterogeneous graphs while preserving native topology, asset types, and electrical relationships across networks.
We develop a physics based data simulation pipeline that combines reference and procedurally generated distribution networks with network reconfigurations, fault scenarios, and configurable sensing conditions.
We present \textit{Mycelium}, a heterogeneous graph transformer with structure aware communication edges and electrical reference features that encode network position and nominal phase orientation, together with task specific temporal readouts which generate per task outputs.
We train Mycelium on reference as well as synthetic grids, and study its generalization on benchmark networks completely excluded from training and validation. Mycelium is observed to outperform task specific neural baselines on most reported benchmark metrics.
Architectural ablations and the aforementioned studies reveal Mycelium's capability to learn representations of the underlying physics which serves to enhance cross-task performance, thereby addressing a significant challenge in unified grid models.

\end{abstract}

\section{Introduction}

% Why distribution system is important
% ML for grid
% XXXXX

% Most of the literature focus on transmission grid

% but distribution one is also very useful because of

Distribution networks supply homes, businesses, and critical services while integrating rooftop solar, batteries, electric vehicles, and electrified heating. Reliable operation amid increasingly variable demand and generation requires better monitoring and inference~\citep{cleenwerck2026distribution}. Machine learning offers a promising approach by learning physics based invariant patterns across networks and operating conditions~\citep{hamann2024foundation}. However, early efforts toward general purpose grid models have emphasized transmission systems, where measurements and power flow data are more standardized~\citep{hamann2024foundation,puech2026genco}.

% - partial metering, making classical methods in accurate
% - inaccurate topology, hard to fix
% - can directly help with everyday applications e.g. operations (outages), and through power flow (e.g. accurately reviewing connection requests, outage risks assesments, scenario planning, etc..  requires significant amount of simulations, requiring significant amounts of power flow solutions)
% - orders of magnitude more grids to use shared learning compared to transmission

Extending these capabilities to distribution networks presents distinct challenges. Transmission networks are generally better instrumented and commonly modeled using balanced single-phase equivalents~\citep{hamann2024foundation}. %Distribution networks instead combine radial or weakly meshed topologies with heterogeneous equipment, voltage levels, and phase configurations. Measurements are often sparse, asynchronous, and noisy, while customer-level telemetry may be unavailable because of privacy and ownership constraints. Network records can also become outdated as switching operations, customer connections, and distributed energy resources alter connectivity and operating conditions. ~\cite{fur2022post} illustrates these observational gaps, documenting limited central monitoring of low-voltage switch states. Consequently, topological quantities needed for electrical state estimation, such as switch states and customer phase assignments may themselves require inference. The volume of data generated by the power sector therefore does not automatically provide consistently labeled training examples. In developing countries, limited data infrastructure, data access, and analytical capacity further constrain the use of energy data~\citep{esmap2017energy}.
Distribution networks combine radial or weakly meshed topologies with diverse equipment, voltage levels, and phase configurations, and distribution measurements are often sparse, asynchronous, noisy, or restricted by privacy and ownership. Evolving connectivity and operating conditions can also render network records outdated. Limited central monitoring of low voltage switches~\citep{fur2022post} illustrates why switch states and customer phase assignments may require inference alongside electrical states. Thus, abundant power sector data do not necessarily provide consistently labeled training examples. In developing countries, limited data infrastructure, access, and analytical capacity further constrain their use~\citep{esmap2017energy}.
%
% Why we need cross-grid model?
Existing learning based approaches address important distribution grid tasks, including state estimation, phase attribution, and fault diagnosis~\citep{duan2019deep,dande2025consumer,chanda2024heterogeneous}. However, a model developed for a particular network can rely on patterns specific to its topology, measurement configuration, and operating distribution. Maintaining separate models for every network and task would multiply data, computation, and engineering requirements as networks evolve. This motivates the use of shared models that remain useful across heterogeneous networks and support multiple inference tasks. The central question is whether a single model can generalize to unseen distribution networks under sparse, noisy, and incomplete measurements.

We introduce \emph{Mycelium}, a shared heterogeneous graph model for state estimation, phase attribution, switch state inference, and fault diagnosis. Mycelium uses heterogeneous equipment representations, a domain specific graph representation with virtual communication edges, and inductive biases constraining electrical quantity outputs to support learning across topologically and electrically diverse networks. We combine this architecture with a procedure for synthetically generating diverse yet representative distribution grids to study the intersection of topological generalization and task generalization under a single model. Our main contributions are:

\begin{itemize}
\item \textbf{Domain informed architecture.}
We introduce a heterogeneous graph architecture combining equipment aware representations, communication structures beyond the grid's own topology, and reference setpoints for quantities such as voltage angles. Ablation studies are performed to quantify their effects across tasks.

\item \textbf{Multi-task capability with sparse measurements.}
Mycelium supports the tasks of state estimation, phase attribution, switch state inference, and fault diagnosis from sparse, noisy, and incomplete measurements. Comparisons with single task training and finetuning characterize the benefits and tradeoffs of sharing.

\item \textbf{Out-of-distribution generalization.}
We evaluate Mycelium on 52 unseen test grids and a frozen benchmark of five grids from source families excluded from training and validation. The results show promising cross-grid generalization, which we further characterize through ablations among architectural choices, task outputs, and grid size.

\item \textbf{Physics grounded data generation and augmentation.}
We develop a simulation pipeline combining curated reference networks and procedurally generated synthetic networks under network reconfigurations, faults, and varied sensing conditions. Scaling experiments quantify the benefits and task dependent tradeoffs of synthetic augmentation on unseen grids.
\end{itemize}

\section{Related Work}

\paragraph{Topology aware and cross-grid graph learning.}

GridFM \citep{hamann2024foundation} proposes initial steps towards a grid foundation model, introducing the idea of masked reconstruction over diverse grid topologies and operating conditions, followed by finetuned adaptation. \cite{zhu2025gnns} propose a physics informed self supervised graph transformer with reusable initialization across power flow (PF), optimal PF (OPF), contingency analysis, and outage prediction, but demonstrate neither cross-grid transfer nor a single shared model across grids.
MxGPS \citep{papaioannou2026mxgps} shows that joint PF and state estimation training can improve cross-grid robustness, but at the cost of in-distribution accuracy and with only two related regression tasks on small, balanced transmission networks. Finally, \citet{huang2026unified} show that a frozen encoder can provide a reusable representation interface across unseen transmission grids and heterogeneous computational tasks. Their downstream computations nevertheless rely on decoders trained independently per task.

\paragraph{Learning for distribution systems.}
\citet{zamzam2020physics} exploit known grid topology to sparsify neural estimators. DSS$^{2}$~\citep{habib2023deep} combines heterogeneous graphs with weighted least squares (WLS) residuals and power flow constraints to learn the full state without clean state labels, while F-PINN~\citep{azam2026physics} uses factor graphs and augmented Lagrangian training to enforce physical constraints. For phase identification, \citet{dande2025consumer} train GNNs on network specific power flow simulations, assuming known customer mapping to upstream transformers. Switch state inference has progressed from CNN classification of simulated micro phasor measurement unit ($\mu$PMU) time series~\citep{duan2019deep} to topology aware GNNs~\citep{madbhavi2023distribution}. Alternatively DGS-GNN~\citep{ebtia2024power} uses GraphSAGE~\citep{hamilton2017inductive} embeddings of measurements alongside physical topology to recover switch states. For fault diagnosis, \citet{chanda2024heterogeneous} jointly predict various attributes of a given fault event using graph convolutional networks (GCNs), while \citet{karabulut2026assessing} study spatiotemporal fault localization under distributed energy resources (DER) shifts, finding asymmetric generalization across penetration regimes. These studies suffer from specificity to grid topologies, leaving the problem of cross-grid generalization understudied.

\paragraph{Generalist learning for transmission systems.}
Recent work extends task specific surrogates toward cross-grid and generalist models. CANOS~\citep{piloto2024canos} approximates AC OPF on networks of up to 10,000 buses under $N-1$ outages, but relies on per grid training. PF$\Delta$~\citep{rivera2026pf} benchmarks power flow across six large scale transmission systems, but demonstrates limitations in electrical feasibility and cross-grid transfer. LUMINA~\citep{li2026lumina} performs a study in zero-shot AC OPF cross-grid generalization through pretraining across multiple topologies, but only demonstrates generalization across up to two unseen grids. GridSFM~\citep{yang2026gridsfm} pretrains across approximately 200 networks for AC OPF and feasibility screening, although cross-grid generalization is shown to require finetuning. GENCO~\citep{puech2026genco} provides a common architecture for power flow, OPF, and state estimation, but uses separate task/grid checkpoints and evaluates cross-grid pretraining only for power flow. The aforementioned works show a lack of cross-grid generalization across multiple tasks, especially under temporally varying sparse measurements.

% \paragraph{Multi-task representation learning for physical systems.}

% For distribution system

% For transmission system:
% Gridsfm\
% genco
% canos
% pfdelta

\section{Task Overview and Mycelium Architecture}
% We study four tasks:

% \begin{itemize}
% \item \textbf{State estimation:} reconstructing per-bus, per-phase voltage-magnitude and angle trajectories at both observed and unobserved buses;
% \item \textbf{Phase attribution:} identifying the A/B/C phase connection of each single-phase customer served by a three-phase transformer;
% \item \textbf{switch state classification:} inferring whether each switch is open or closed from electrical measurements alone; and
% \item \textbf{Fault classification and localization:} determining whether a fault occurred during the observation window and, if so, predicting its type and affected element.
% \end{itemize}

% We restrict the scope of this paper to inference from (possibly) sparse measurements, assuming no ability to control generation, grid equipment, or DER assets. We provide a mathematical formulation of these tasks in Appendix \ref{sec:task_overview}.
\subsection{The Four Tasks}
We infer electrical states, connection attributes, and fault events from a network graph and partial measurements over $T=24$ hourly timesteps. Table~\ref{tab:task_summary} summarizes the four tasks: state estimation (SE) predicts continuous voltage trajectories. Phase attribution (PA) and switch state inference (SW) classify consumer connections and branch states, respectively, which remain fixed within the window. Switch locations are known, while their open/closed states are predicted via edge classification. Fault diagnosis (F) predicts normal operation or one of five fault classes and, for faulted samples, the affected equipment under the assumption that the fault onset occurs at the final observed timestep. We perform inference from available observations without assuming control of generation or grid equipment. Formal definitions are given in Appendix~\ref{app:task_overview}.

\label{sec:model}

% Mycelium uses one parameter set for four inference tasks across variable-size distribution networks. Given a structural graph and partially observed measurements over $T=24$ hourly timesteps, it applies shared spatial message passing followed by task-specific temporal readouts. The outputs comprise per-timestep voltage states and window-level phase, switch state, and fault predictions. Figure~\ref{fig:grid_ontology} summarizes the architecture.

\begin{figure}[t]
\centering
\includegraphics[width=0.98\linewidth]{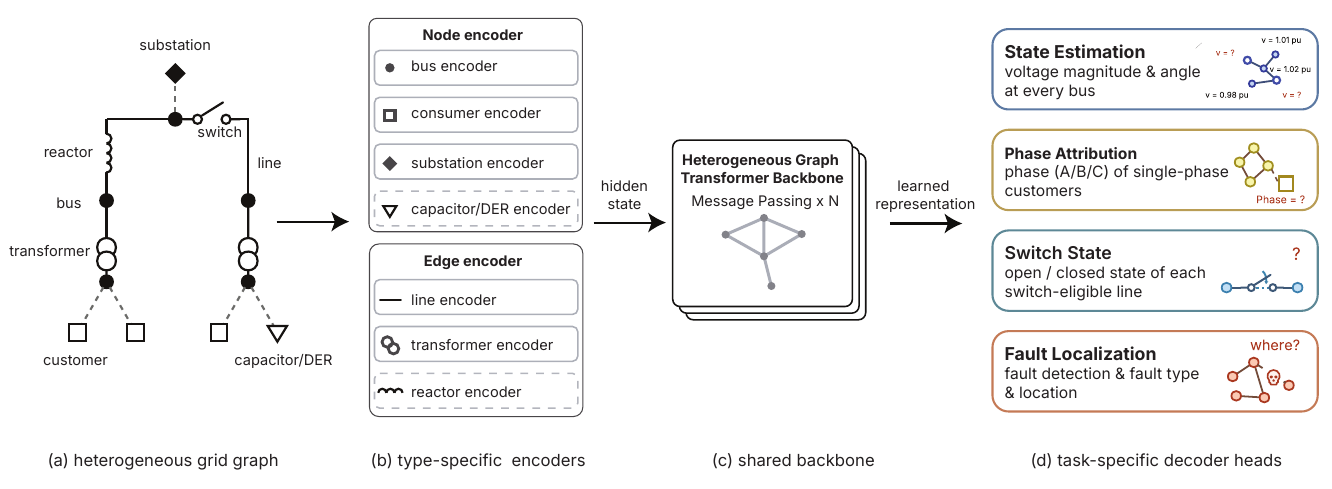}
\caption{Mycelium architecture. (a) A heterogeneous graph models network components and electrical relations. (b) Type specific encoders embed attributes and temporal measurements. (c) A shared heterogeneous graph transformer performs spatial message passing. (d) Task specific temporal heads support SE, PA, SW, and fault diagnosis.}
\label{fig:grid_ontology}
\end{figure}

\subsection{Mycelium's Graph Representation and Inputs}
\label{sec:grid_ontology}
Mycelium encodes the structural graph and masked measurements, applies shared spatial message passing at each timestep, and uses task specific temporal readouts to produce the four outputs. Figure~\ref{fig:grid_ontology} summarizes the architecture.
Each network is defined as a heterogeneous graph $\mathcal{G}=(\mathcal{V},\mathcal{E})$ with bus, consumer, and substation nodes, together with capacitor and DER nodes when present. Lines, transformers, and reactors form distinct bus--bus relations. Service and source links attach consumers and substations to their buses. All switch eligible line edges remain present regardless of operating state.
The network components satisfy the AC power flow equations~\citep{grainger1994power}, given as
% \begin{equation}
$
    \mathbf{f}\!\left(
    \mathbf{V}_t,\boldsymbol{\Theta}_t,\mathbf{p}_t,\mathbf{q}_t;
    \mathcal{G},\mathbf{c}_t
    \right)=\mathbf{0},
    \label{eq:abstract_power_flow}
$
% \end{equation}
where $\mathbf{V}_t$ and $\boldsymbol{\Theta}_t$ are per phase bus voltage magnitudes and angles, $\mathbf{p}_t$ and $\mathbf{q}_t$ are net active and reactive power injections, and $\mathbf{c}_t$ collects equipment parameters and operating configurations. Equipment nameplate ratings and static electrical parameters are assumed available, while switch open/closed states are excluded from edge inputs. Inputs combine equipments' static attributes with available voltage, power, current, and feeder head measurements. Observability masks distinguish zero filled missing readings from observed zeros. Separate supervision masks identify valid prediction targets.
Type specific node and edge encoders map these inputs into a common hidden dimension while retaining the temporal axis. Their parameters are shared across entities of the same type and across networks. Appendix~\ref{app:input_schema} describes the input representation and masks.

\subsection{Distribution Grid Inductive Biases}
% Umbrella: distribution networks are deep, radial and weakly observed. Both
% mechanisms below attack the resulting long-range information problem, by
% changing the REPRESENTATION rather than the capacity. Cite the ablation
% table once here; per-mechanism numbers go in the subsections.

\begin{wrapfigure}{r}{0.35\textwidth}
\centering
\includegraphics[width=\linewidth]{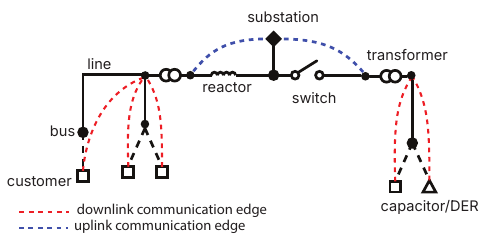}
\caption{Mycelium’s message passing graph inherits the base grid topology and adds communication edges. Blue dashed edges connect the substation to transformer primary buses, and red dashed edges connect customers to their serving transformer’s secondary bus.}
\label{fig}
\end{wrapfigure}
% \subsubsection{Communication Graph}
% \label{sec:communication_graph}
\paragraph{Communication Edges.}
Long radial paths can require many message passing layers to exchange information between customers and the substation. We shorten these paths with two typed bidirectional communication edges, illustrated in Figure~\ref{fig}: \emph{uplinks} connect the substation to each distinct distribution transformer primary bus, and \emph{downlinks} connect customers to their serving transformer's secondary bus.
Together with the physical transformer relation, these bidirectional shortcuts place each customer with a valid transformer association within three message passing hops of the substation. The connections are derived from the available topology and customer registry and introduce no additional measurement features. Relation specific parameters distinguish the shortcuts from physical electrical connections.

% We motivate the choice of these communication edges through their role in
% state estimation. Reconstructing bus voltages from partial observations can
% require measurements beyond a predictor's physical message passing
% neighborhood. The following result shows how communication edges can reduce
% the resulting prediction error using a linear message passing learner.
We analyze voltage prediction with a linear learner using power-of-two
ancestor links. This idealized communication graph differs from Mycelium's
equipment-based uplinks and downlinks.
On a radial grid of physical depth $h\ge1$, let $\mathbf V$ contain bus voltage
magnitudes and $\mathbf p,\mathbf q$ the active and reactive injections.
Assume the linear distribution flow (LinDistFlow) model \citep{deka2024learning},
\begin{equation}
 \mathbf V^{\odot2}=2\mathbf R\mathbf p+2\mathbf X\mathbf q+V_0^2\mathbf1.
 \label{eq:communication_lindistflow}
\end{equation}
Here squaring is elementwise, $V_0$ is fixed, and $\mathbf R,\mathbf X$
encode grid topology and powerline resistances and reactances.
Suppose the true injections have joint covariance $\boldsymbol{\Sigma}\succ0$.
Each bus receives its own injection measurements,
with additive zero-mean noise independent of the injections and joint noise
covariance $\tau\boldsymbol{\Sigma}$, where $\tau\ge0$ is the ratio of sensing variance
to ground truth variance.
With a slight abuse of notation, $\mathbf p,\mathbf q$ also denote the
sensed learner inputs.
We study the prediction task $(\mathbf p,\mathbf q)\mapsto\mathbf V^{\odot2}$.
Let $\mathcal D$ denote the joint probability distribution of injections
and measurement noise described above. For snapshots $s\sim\mathcal D$,
denote predicted squared magnitudes by $\widehat{\mathbf V^{\odot2}}_{\mathbf{w}}(s)$ and train weights $\mathbf{w}$ on
\begin{equation}
 \mathcal L(\mathbf{w})=\frac12\,\mathbb{E}_{s\sim\mathcal D}
 \left[\left\|
 \widehat{\mathbf V^{\odot2}}_{\mathbf{w}}(s)-\mathbf V(s)^{\odot2}
 \right\|_2^2\right].
 \label{eq:communication_loss}
\end{equation}
\begin{proposition}[Exact communication and voltage prediction]\label{prop:communication_learning}
Under the preceding assumptions, there exists a linear learner $H$ for this task
that computes each prediction in $2\lceil\log_2 h\rceil$ message passing rounds
using physical grid edges and added bidirectional ancestor edges.
Let $\lambda_j>0$ be its loss-Hessian eigenvalues and $c_j$ its initial
parameter errors relative to the optimum in an orthonormal eigenbasis. Use the constant gradient descent (GD) step size
$\eta'=\eta/(1+\tau)$, where $\eta$ is the step size for noiseless inputs.
After $n\ge1$ full GD steps from zero initialization with
$0<\eta'<2/\max_j\lambda_j$, its error
$\mathcal E_{\rm H}(n)=2\mathcal L(\mathbf w^{(n)})$ satisfies, for every integer
$\ell\ge2\lceil\log_2 h\rceil$,
\begin{equation}
 \mathcal E_{\rm P}^\star(\ell)-\mathcal E_{\rm H}(n)
 =\frac{C_\ell}{1+\tau}
  -\sum_j\lambda_j(1-\eta'\lambda_j)^{2n}c_j^2.
 \label{eq:communication_prediction_gap}
\end{equation}
Here $\mathcal E_{\rm P}^\star(\ell)$ is the minimum error among linear predictors
using only physical grid edges and at most $\ell$ message passing rounds per prediction.
\end{proposition}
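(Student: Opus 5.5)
The plan has four steps: build the learner, compute the optimal linear predictor and its loss, solve the gradient descent dynamics on the resulting quadratic, and handle the physical-edge baseline.

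\textbf{Constructing the learner $H$.} Under LinDistFlow, entry $i$ of $2\mathbf R\mathbf p+2\mathbf X\mathbf q$ is a sum over all buses $k$ of $2(R_{ik}p_k+X_{ik}q_k)$. Here $R_{ik}$ and $X_{ik}$ are the resistance and reactance of the common path from the root to $\mathrm{lca}(i,k)$. Write $r(u)$ for the resistance of edge $(u,\mathrm{parent}(u))$. Then $R_{ik}=\sum_{u\in\mathrm{anc}(i)\cap\mathrm{anc}(k)} r(u)$, and the map becomes
\[
(\mathbf R\mathbf p)_i=\sum_{u\preceq i} r(u)\,P_u,\qquad P_u=\sum_{k\in\mathrm{subtree}(u)}p_k .
\]
This gives a two-phase computation, each phase a tree prefix sum implemented by pointer doubling on the ancestor edges at distances $2^j$.
\begin{itemize}
\item \emph{Upward phase.} Compute the subtree sums $P_u$ and $Q_u$. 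Using the bidirectional ancestor links, a node at distance $2^j$ receives partial sums. The depth-$h$ aggregation finishes in $\lceil\log_2 h\rceil$ rounds via the standard doubling scheme, in which each node accumulates descendant contributions level by level.
\item \emph{Downward phase.} Compute the root-to-$i$ weighted prefix sums of $r(u)P_u+x(u)Q_u$, again by doubling, in another $\lceil\log_2 h\rceil$ rounds.
\end{itemize}
The total is $2\lceil\log_2 h\rceil$ rounds. Every operation is linear with fixed coefficients, so $H$ can represent the exact map $(\mathbf p,\mathbf q)\mapsto 2\mathbf R\mathbf p+2\mathbf X\mathbf q+V_0^2\mathbf 1$. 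I parametrize $H$ linearly in weights $\mathbf w$, namely an output matrix $W$ plus a bias, applied to the sensed inputs.

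\textbf{Loss and optimum.} With sensed input $z=s+\epsilon$, where $\mathrm{Cov}(s)=\boldsymbol\Sigma$ and $\mathrm{Cov}(\epsilon)=\tau\boldsymbol\Sigma$, the input covariance is $(1+\tau)\boldsymbol\Sigma$. The population loss is a quadratic in $\mathbf w$ with Hessian $(1+\tau)$ times the noiseless Hessian. This is exactly why the rescaled step $\eta'=\eta/(1+\tau)$ leaves the dynamics comparable to the noiseless case.

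The optimal linear predictor is $W^\star=A\boldsymbol\Sigma\big((1+\tau)\boldsymbol\Sigma\big)^{-1}=A/(1+\tau)$, where $A=[2\mathbf R\;2\mathbf X]$, with a bias matching the means. Its residual error is
\[
\mathcal E_{\rm H}^\star=\operatorname{tr}\big(A\boldsymbol\Sigma A^\top\big)\Big(1-\tfrac1{1+\tau}\Big).
\]
This is the Wiener shrinkage error.

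\textbf{GD dynamics.} For a quadratic, $\mathbf w^{(n)}-\mathbf w^\star=(I-\eta' \mathcal H)^n(\mathbf w^{(0)}-\mathbf w^\star)$. Hence
\[
2\mathcal L(\mathbf w^{(n)})=\mathcal E_{\rm H}^\star+\sum_j\lambda_j(1-\eta'\lambda_j)^{2n}c_j^2 .
\]
The step-size condition $0<\eta'<2/\max_j\lambda_j$ guarantees contraction, though the identity holds regardless.

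\textbf{Physical-edge baseline.} A predictor using $\ell$ rounds on physical edges can only use inputs within graph distance $\ell$ of bus $i$, so $W$ must carry a locality sparsity pattern $S_\ell$. Define $\mathcal E_{\rm P}^\star(\ell)$ as the minimum of a constrained Wiener problem, computed per row as a projection in the $(1+\tau)\boldsymbol\Sigma$ inner product. Since the noise covariance is proportional to $\boldsymbol\Sigma$, the constrained optimum is the noiseless constrained optimum divided by $1+\tau$. Its error therefore decomposes as
\[
\mathcal E_{\rm P}^\star(\ell)=\operatorname{tr}(A\boldsymbol\Sigma A^\top)\tfrac{\tau}{1+\tau}+\tfrac{C_\ell}{1+\tau},
\]
where $C_\ell\ge0$ is the noiseless locality-projection residual. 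This residual is nonzero in general, because $A$ is dense on a radial tree: every bus's voltage depends on injections at depth up to $h$, which lies farther than $\ell$ hops for $\ell$ small relative to $2h$. Subtracting this from the GD expression cancels the shared shrinkage term and yields the claimed identity.

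The condition $\ell\ge2\lceil\log_2h\rceil$ is only needed so that the comparison is at equal round budgets. The formula itself holds with $C_\ell$ defined as above.

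\textbf{Main obstacle.} The delicate step is the doubling construction. Prefix sums over a tree using only ancestor-at-$2^j$ links must be carried out without double counting, especially for the upward subtree sums at nodes with many children. I would handle this by processing rounds in order of decreasing $j$, so that each link carries a disjoint segment aggregate, and I would verify the round count carefully.
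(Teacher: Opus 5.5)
\textbf{Gap 1: the learner's iterates are not $2K$-round predictors.} Your loss decomposition, the Wiener optimum $\alpha A$ with $\alpha=(1+\tau)^{-1}$, the quadratic GD identity, and the Schur-complement floor for the local comparator all match the paper. Cancelling the common floor $\nu_\tau=\tfrac{\tau}{1+\tau}\operatorname{tr}(A\boldsymbol\Sigma A^{\mathsf T})$ is exactly how the identity arises. The problem is the learner. The proposition needs a learner that computes each prediction in $2K$ rounds, $K=\lceil\log_2 h\rceil$, for \emph{every} weight value, because the identity concerns the iterates $\mathbf w^{(n)}$ and not only the limit. You instead train an unrestricted affine map $W\widetilde{\mathbf z}+b$, and your doubling argument only shows that the limit $\alpha A$ is realizable in $2K$ rounds. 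From zero initialization the first gradient is $-A\boldsymbol\Sigma$. With correlated injections this matrix is generally dense, including between buses in different subtrees of the root, where $A$ itself vanishes. Those entries must be routed through the root and can exceed $2K$ hops: for $h=1$ you are allowed zero rounds, yet $W^{(1)}$ already needs two.

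The missing idea is to place the trainable parameters inside the doubling computation. Keep both phases, but at bus $u$ multiply the subtree sums by learned $w_{r,u},w_{x,u}$ instead of the hard-coded $2r(u),2x(u)$. Writing $\mathbf T$ for the ancestor-or-self indicator matrix, so that $\mathbf R=\mathbf T\operatorname{diag}(\mathbf r)\mathbf T^{\mathsf T}$, this gives $\widehat{\mathbf y}_{\mathbf w}=\mathbf T[\operatorname{diag}(\mathbf w_r)\mathbf T^{\mathsf T}\widetilde{\mathbf z}_p+\operatorname{diag}(\mathbf w_x)\mathbf T^{\mathsf T}\widetilde{\mathbf z}_q]$. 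This map is computed in $2K$ rounds for all $\mathbf w$, and the class contains $\alpha(2\mathbf r,2\mathbf x)$, so its minimum is still $\nu_\tau$. You must then prove $\lambda_j>0$. The paper does this via $\mathbf u^{\mathsf T}\mathbf H_\tau\mathbf u=(1+\tau)\operatorname{tr}(\mathbf M(\mathbf u)\boldsymbol\Sigma\mathbf M(\mathbf u)^{\mathsf T})$, where $\mathbf M(\mathbf u)=[\mathbf T\operatorname{diag}(\mathbf u_r)\mathbf T^{\mathsf T}\;\;\mathbf T\operatorname{diag}(\mathbf u_x)\mathbf T^{\mathsf T}]$ is nonzero for $\mathbf u\neq\mathbf 0$ because $\mathbf T$ is invertible. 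Two smaller points:
\begin{itemize}
\item Your double-counting worry is unfounded. The round operators $\mathbf I+\mathbf J^{2^k}$, with $\mathbf J$ the nonroot-parent matrix, commute and multiply to $\mathbf T$ by binary expansion, in any order.
\item With raw inputs and a trainable bias, the Hessian is not $(1+\tau)$ times the noiseless one, since the mean and bias blocks do not scale. Work in centered coordinates, as the paper does.
\end{itemize}

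\textbf{Gap 2: the comparator is only bounded from one side.} You show that an $\ell$-round physical predictor obeys the locality pattern. Writing $\mathcal A_i$ for the injection coordinates within $\ell$ hops of bus $i$ and $\mathcal U_i$ for the rest, this means $\mathbf F_{i,\mathcal U_i}=\mathbf 0$. You then take the constrained Wiener minimum as $\mathcal E_{\rm P}^\star(\ell)$. That yields only $\mathcal E_{\rm P}^\star(\ell)\ge\nu_\tau+C_\ell/(1+\tau)$, and hence only ``$\ge$'' in the claimed identity. Equality needs the converse: the constrained optimum must be implementable in $\ell$ physical rounds. With $\mathbf m_i$ the $i$th row of $A$, its rows are $\alpha(\mathbf m_{i,\mathcal A_i}+\mathbf m_{i,\mathcal U_i}\boldsymbol\Sigma_{\mathcal U_i\mathcal A_i}\boldsymbol\Sigma_{\mathcal A_i\mathcal A_i}^{-1})$. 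The paper supplies this by non-backtracking flooding on the tree, with unrestricted message width and node-specific coefficients. Each measurement reaches bus $i$ exactly once, after $d(i,j)$ rounds, so bus $i$ obtains exactly $\widetilde{\mathbf z}_{\mathcal A_i}$ and applies its row. Separately, your remark that $A$ is dense is inaccurate, since $R_{ij}=X_{ij}=0$ across different root subtrees, but the identity does not depend on it.
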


Here $C_\ell$ is the squared voltage variation not linearly explained by
injections within $\ell$ physical hops. When $C_\ell>0$, the gap becomes positive
after finitely many steps and tends to $C_\ell/(1+\tau)$.
Appendix~\ref{app:communication_theory} gives the Schur-complement expression,
learner construction, and proof.
%
% Mycelium's equipment based uplinks and downlinks, while using only a subset
% of the possible hierarchical connections, make such information available
% to the shared representations used by all four tasks.
% The guarantee requires injection measurements at every bus and the stated
% noise model. Benefits with the practical graph and missing measurements
% are evaluated empirically.
The result assumes exact LinDistFlow, injection coverage at every bus, clean voltage targets, and population GD. Mycelium's nonlinear model, partial measurements, and AdamW training are evaluated empirically.
% Sparse, tree-structure network topologies can require many message-passing steps to exchange information between consumers and the substation, some are even more than 50.
% We augment the physical graph with two typed shortcut relations:
% \begin{itemize}
%     \item \emph{uplinks} connect the substation to each distinct distribution-transformer primary bus,
%     \item \emph{downlinks} connect consumers to their serving transformer's secondary bus.
% \end{itemize}

% These relations are derived from the available topology and consumer registry and introduce no additional measurement features.
% Bidirectional message passing provides the path
% \[
% \text{substation}
% \leftrightarrow
% \text{primary bus}
% \leftrightarrow
% \text{secondary bus}
% \leftrightarrow
% \text{consumer},
% \]
% placing consumers with valid transformer associations within three hops of the feeder head. Separate relation-specific parameters allow the model to distinguish these communication shortcuts from physical electrical connections.

% \begin{figure}[t]
% \centering
% \includegraphics[width=0.6\linewidth]{fig/comm.pdf}
% \caption{XXX.}
% \label{fig:grid_ontology}
% \end{figure}

% \subsubsection{Nominal-Angle Reference}
% \label{sec:nominal_angle_reference}
\paragraph{Nominal Angle Reference.}
Voltage angles contain predictable offsets from source phase sequences and transformer connections. We derive a static nominal angle $\theta^{\mathrm{nom}}_{i,p}$ for each (bus, phase) pair from equipment metadata, without solved voltage trajectories. Observed angles and targets are expressed relative to this reference as
$\Delta\theta_{i,p,t} = \operatorname{wrap}(\theta_{i,p,t} - \theta^{\mathrm{nom}}_{i,p})$,
where $\operatorname{wrap}$ maps angles to $[-\pi,\pi)$, and predictions are
converted back as
$\widehat{\theta}_{i,p,t} = \operatorname{wrap}(\theta^{\mathrm{nom}}_{i,p} + \widehat{\Delta\theta}_{i,p,t})$.
This parameterization focuses regression on deviations from nominal phase
structure. Appendix~\ref{app:nominal_angle_reference} describes the reference construction.

\subsection{Shared Backbone and Task Readouts}
\label{sec:hgt_backbone}
\label{sec:task_readouts}

We use a heterogeneous graph Transformer~\citep{hu2020heterogeneous} with relation specific attention over physical, attachment, and communication edges.
At layer $\ell$, each node aggregates messages within each relation and sums their contributions:
\begin{equation}
\mathbf{m}_{i,t}^{(\ell)}
=
\sum_r \mathop{\bigg\Vert}_{k=1}^{H}
\sum_{j\in\mathcal{N}_r(i)}
\alpha_{ji,t}^{(\ell,r,k)}
\mathbf{v}_{ji,t}^{(\ell,r,k)}.
\end{equation}

% Here, $\mathcal{N}r(i)$ denotes incoming neighbors under relation $r$, $k$ indexes the $H$ attention heads, and $\Vert$ denotes concatenation. The weights $\alpha{ji,t}^{(\ell,r,k)}$ scale messages $\mathbf{v}_{ji,t}^{(\ell,r,k)}$ from $j$ to $i$; outputs are concatenated across heads and summed across relations.
Here, $\mathcal{N}_r(i)$ contains incoming neighbors under relation $r$, $k$ indexes the $H$ attention heads, and $\Vert$ denotes concatenation. The coefficient $\alpha_{ji,t}^{(\ell,r,k)}$ weights the message $\mathbf{v}_{ji,t}^{(\ell,r,k)}$ from $j$ to $i$. Head outputs are concatenated and relation outputs summed.
Physical edge embeddings condition both attention scores and message values. Each layer combines attention with pre normalization and an FFN specific to the node type, with residual connections around both. Spatial attention operates independently at each timestep with shared parameters, preserving $\mathbf{H}^{(\ell)}\in\mathbb{R}^{|\mathcal{V}|\times T\times d}$ without temporal pooling. The model uses eight layers, hidden dimension $128$, four attention heads, and approximately $12.2$M parameters. Appendix~\ref{app:hgt_backbone} provides further details.
For state estimation, a causal dilated temporal convolutional network~\citep{bai2018empirical} processes each bus sequence. A shared linear head predicts per phase voltage magnitudes and nominal angle residuals, with absolute angles recovered as described in Appendix~\ref{app:nominal_angle_reference}. For phase attribution, mask aware temporal attention pools consumer, attached bus, and network representations for three-way classification. For switch inference, the readout combines attention weighted, mean, and maximum temporal summaries of endpoint and branch representations.
For fault diagnosis, a temporal readout compares the final representation with an attended history using signed and absolute differences. Network and candidate readouts produce $\mathbf{z}_{\mathcal{G}}^{\mathrm{F}}$ and $\mathbf{z}_{c}^{\mathrm{F}}$.
% A shared scorer evaluates the network's bus, line, and transformer candidates:
% \begin{equation}
% s_c=f_{\mathrm{loc}}\!\left(
% \left[\mathbf{z}_{c}^{\mathrm{F}}\Vert\mathbf{z}_{\mathcal{G}}^{\mathrm{F}}\right]\right),
% \qquad c\in\mathcal{C}(\mathcal{G}).
% \label{eq:fault_candidate_score}
% \end{equation}
A shared scorer evaluates the network's bus, line, and transformer candidates,
$s_c = f_{\mathrm{loc}}([\mathbf{z}_{c}^{\mathrm{F}} \Vert \mathbf{z}_{\mathcal{G}}^{\mathrm{F}}])$
for $c \in \mathcal{C}(\mathcal{G})$.
Scoring the available candidates accommodates different network sizes without a fixed location vocabulary. A separate classifier combines network context and maximum pooled candidate evidence to distinguish normal operation and five fault classes. Appendix~\ref{app:task_readouts} describes all readouts.

\subsection{Multi-task Training}
\label{sec:multitask_objective}

We jointly optimize the backbone and task heads with unit coefficients for the four task losses.
% \begin{equation}
% \mathcal{L}=\mathcal{L}_{\mathrm{SE}}+
% \mathcal{L}_{\mathrm{phase}}+
% \mathcal{L}_{\mathrm{switch}}+
% \mathcal{L}_{\mathrm{fault}}.
% \label{eq:multitask_loss}
% \end{equation}
State estimation uses scaled Smooth-$L_1$ errors for voltage magnitude and wrapped angle. Phase attribution and switch inference use cross entropy. The fault loss combines class weighted six-class classification and equipment localization. For entity level losses, we average over valid targets within each network window and then over applicable windows, preventing target count alone from giving larger networks greater weight. Appendix~\ref{app:multitask_objective} describes the detailed losses and masking.

\section{Data and Evaluation Protocol}
\subsection{Data}
\paragraph{Reference and Synthetic Network Corpus}
The reference corpus combines publicly available models from NREL SMART-DS~\citep{OEDI_Dataset_2981}, EPRI, the IEEE test feeder suite~\citep{schneider2017analytic}, D-Suite~\citep{Deakin2025}, Iowa State~\citep{bu2019time}, and European utilities~\citep{taye2024set}. Appendix~\ref{app:reference_data_sources} describes these sources.
We simulate the network models in OpenDSS~\citep{dugan2011open} using convergence checked three-phase unbalanced power flow. Operating cases include transformer reassignment, consumer phase changes, coordinated switching, and faults at buses, lines, and transformer terminals. Seven sensor policies vary measurement coverage, noise, and missingness (Appendix~\ref{app:sensor_policies}). The reference corpus contains 98{,}799 electrical cases and 478{,}891 retained sensor policy samples after filtering.
%
% \paragraph{Synthetic network Corpus}
%
We also generate diverse feeders from street and building geometry, grouping customers into transformer service areas and routing primary and secondary networks. Equipment parameters and planning conventions are sampled from the training reference networks, with 24-hour demand, reactive power, and PV profiles defining operating conditions. Geography, equipment conventions, load climate, and solar region are sampled separately. OpenDSS~\citep{dugan2011open} checks daily operation and high load and high PV stress cases. Bounded repairs adjust phases, taps, reactive support, and equipment sizing. Networks still violating convergence or operating limits are rejected. Appendix~\ref{app:synthetic_generation} details generation and acceptance criteria.

\paragraph{Data Split}
\label{sec:data_split}
We partition reference networks before generating variants, temporal windows, or sensor policy realizations. All derived samples retain their source network's assignment. Each split includes networks exhibiting three task relevant characteristics: meshed topology, single-phase customers served by three-phase distribution transformers, and multiple distribution transformers. Table~\ref{tab:data_split} summarizes the network disjoint assignments.
The test split contains 52 unseen networks and 10{,}038 samples, largely from source families represented in training. The frozen benchmark contains 1{,}043 simulated samples on five utility derived network models from source families excluded from training and validation. The two splits assess generalization across networks and source families, respectively. Checkpoint selection uses validation loss. Because benchmark scenarios and sensor policies repeatedly sample the same five networks, the sample count does not represent 1{,}043 independent network structures.

\begin{table}[t]
\centering
\caption{Network level assignments.}
\label{tab:data_split}
\resizebox{\linewidth}{!}{%
\begin{tabular}{@{}ll@{}}
\toprule
Split & Networks \\
\midrule
Train & SMART-DS AUS/SFO ($70\%$ each); registered IEEE feeder; EPRI \texttt{ckt5}, \texttt{ckt24}; D-Suite \texttt{spd\_r/s/u} \\
Validation & SMART-DS AUS/SFO ($15\%$ each); EPRI \texttt{ckt7}; D-Suite \texttt{spm\_s} \\
Test & SMART-DS AUS/SFO ($15\%$ each) and all GSO; D-Suite \texttt{spm\_r/u} \\
Benchmark & Iowa 240- and 296-bus; European industrial, rural, and urban \\
\bottomrule
\end{tabular}%
}
\end{table}

\subsection{Metrics}
\label{sec:metrics}

\textbf{State estimation.}
Over valid (bus, timestep, phase) entries, we report voltage magnitude MAE in per unit ({p.u.}) and volts (${v_{\mathrm{mag}}}$), angle MAE in degrees (${v_{\mathrm{ang}}}$), and magnitude MAPE. Per unit errors use each bus's voltage base, so the two units can rank models differently across networks.
\textbf{Phase attribution.}
We report three-class top-1 accuracy for single-phase customers on multiphase transformers with at least one observed reading.
\textbf{Switch state.}
We report precision, recall, and F1 over applicable switchable lines, excluding switches whose two endpoints are marked de-energized. OPEN is the positive class, and predictions use a fixed zero logit threshold.
\textbf{Fault diagnosis.}
We report detection F1 ({Det.\ F1}) over normal and faulted samples in each split. On faulted samples, we report five-class type accuracy ({class$_5$}), exact equipment localization accuracy ({Loc.}), and localization within $k$ graph hops ({Hop@$k$}).

\subsection{Implementation Details}

% To assess whether a single shared model can provide competitive performance across heterogeneous distribution grid inference tasks, we compare Mycelium with task-specific baselines:

% \begin{itemize}
%     \item \textbf{State estimation:} weighted least squares (WLS) and a CANOS-based neural estimator~\citep{piloto2024canos} adapted to the same measurement inputs.
%     \item \textbf{Phase attribution:} correlation-based  matching and GraphSAGE-CPI~\citep{dande2025consumer}.
%     \item \textbf{switch state inference:}
%     %GNN-TI \citep{madbhavi2023distribution} and
%     DGS-GNN~\citep{ebtia2024power}.
%     \item \textbf{Fault diagnosis:}
%     %the multi-task GCN of \citet{chanda2024heterogeneous} for joint fault-attribute prediction, and
%     STGATv2 \citep{karabulut2026assessing}.
% \end{itemize}
%
%\subsection{Implementation details}
We train Mycelium on eight 80\,GB NVIDIA A100 GPUs using FP32,
activation checkpointing, and a global batch size of 64. The backbone has eight Transformer layers with pre normalization, hidden width 128, four attention heads, feedforward width 256, and dropout 0.1.
Training uses 12{,}800 samples per epoch for up to 100 epochs, with AdamW~\citep{loshchilov2017decoupled}, learning rate $10^{-3}$, cosine decay, weight decay $10^{-2}$, and gradient clipping at 1.0. We retain the checkpoint with the lowest validation loss and stop after ten epochs without improvement. The reference configuration combines reference and synthetic networks with a 10{,}000-bus training cap. Appendix~\ref{app:training_memory} details memory optimization.

\section{Experiments}

\subsection{Baseline Comparisons}

\begin{table*}[t]
\centering
\caption{Mycelium and task-specific neural baselines on the held-out test split and frozen benchmark. Dashes indicate tasks a baseline does not model. Lower is better for state estimation errors and higher is better otherwise. Best values within each split are bold.}
\label{tab:baselines}
\resizebox{\textwidth}{!}{%
\begin{tabular}{lllcccccccccccccc}
\toprule
& & & \multicolumn{4}{c}{\textbf{State estimation}} & \textbf{Phase} &
\multicolumn{3}{c}{\textbf{Switch}} & \multicolumn{6}{c}{\textbf{Fault}} \\
\cmidrule(lr){4-7}\cmidrule(lr){8-8}\cmidrule(lr){9-11}\cmidrule(lr){12-17}
Split & Baseline & Task &
p.u. $\downarrow$ &
$v_{\mathrm{mag}}$ (V) $\downarrow$ &
$v_{\mathrm{ang}}$ (deg) $\downarrow$ &
$v_{\mathrm{mag}}$ MAPE (\%) $\downarrow$ &
acc $\uparrow$ &
prec $\uparrow$ & rec $\uparrow$ & F1 $\uparrow$ &
det F1 $\uparrow$ & class$_5$ $\uparrow$ & loc $\uparrow$ &
hop@1 $\uparrow$ & hop@2 $\uparrow$ & hop@3 $\uparrow$ \\
\midrule
\multirow{5}{*}{Test}
& CANOS~\citep{piloto2024canos} & State est. &
  \textbf{0.00221} & \textbf{2.86} & \textbf{0.576} & \textbf{0.136} &
  - & - & - & - & - & - & - & - & - & - \\
& GraphSAGE-CPI~\citep{dande2025consumer} & Phase &
  - & - & - & - &
  0.3276 &
  - & - & - & - & - & - & - & - & - \\
& DGS-GNN~\citep{ebtia2024power} & Switch &
  - & - & - & - & - &
  0.0807 & 0.2399 & 0.1207 &
  - & - & - & - & - & - \\
& STGATv2~\citep{karabulut2026assessing} & Fault &
  - & - & - & - & - & - & - & - &
 - & - & 0.1349 & 0.2162 & 0.2683 & 0.3107 \\
\cmidrule(lr){2-17}
& Mycelium 10k & All four &
  0.00345 & 5.99 & 1.107 & 0.234 &
  \textbf{0.5713} &
  \textbf{0.9960} & \textbf{0.8137} & \textbf{0.8957} &
  \textbf{1.0000} & \textbf{0.4840} & \textbf{0.1710} & \textbf{0.2369} & \textbf{0.2999} & \textbf{0.3314} \\
\midrule
\multirow{5}{*}{Benchmark}
& CANOS~\citep{piloto2024canos} & State est. &
  0.01754 & \textbf{9.50} & 3.254 & 5.591 &
  - & - & - & - & - & - & - & - & - & - \\
& GraphSAGE-CPI~\citep{dande2025consumer} & Phase &
  - & - & - & - &
  0.3369 &
  - & - & - & - & - & - & - & - & - \\
& DGS-GNN~\citep{ebtia2024power} & Switch &
  - & - & - & - & - &
  0.4737 & 0.4114 & 0.4404 &
  - & - & - & - & - & - \\
& STGATv2~\citep{karabulut2026assessing} & Fault &
  - & - & - & - & - & - & - & - &
  - & - & 0.1177 & 0.1772 & 0.1825 & 0.2262 \\
\cmidrule(lr){2-17}
& Mycelium 10k & All four &
  \textbf{0.01267} & 81.28 & \textbf{1.635} & \textbf{5.425} &
  \textbf{0.4001} &
  \textbf{0.9553} & \textbf{0.7198} & \textbf{0.8210} &
  \textbf{1.0000} & \textbf{0.7209} & \textbf{0.1958} & \textbf{0.3175} & \textbf{0.3558} & \textbf{0.4114} \\
\bottomrule
\end{tabular}%
}
\end{table*}

We compare Mycelium with task specific baselines: WLS and a CANOS based estimator~\citep{piloto2024canos} adapted to the same inputs for state estimation, correlation based matching and GraphSAGE-CPI~\citep{dande2025consumer} for phase attribution, DGS-GNN~\citep{ebtia2024power} for switch state inference, and STGATv2~\citep{karabulut2026assessing} for fault diagnosis.

Table~\ref{tab:baselines} compares task-specific neural baselines with Mycelium 10k, our reference configuration trained on the full reference training set and synthetic network corpus with a maximum training network size of 10{,}000 buses. This size cap applies only during training. Evaluation networks are not truncated. CANOS achieves lower errors on all four state estimation metrics on the test split. On the frozen benchmark, Mycelium has lower per-unit voltage error, angle error, and magnitude MAPE, but higher voltage error in volts: 81.28V versus 9.50V. Across both splits, Mycelium improves phase accuracy over GraphSAGE-CPI, switch state F1 over DGS-GNN, and exact fault localization over STGATv2. We report WLS state estimation and correlation-based phase matching in Appendix~\ref{sec:further-res}. Together, these comparisons show that one Mycelium checkpoint supports all four tasks with competitive accuracy on feeders from unseen source families, without the privileged operating information supplied to WLS.
Appendix~\ref{app:per-network} reports per-network and network macro results, revealing substantial variation across benchmark networks.

\subsection{Ablation Studies}
% same number of communication edges, but randomly distributed between nodes\\
% model component

% We evaluate which architectural choices contribute to cross grid performance. Each ablation changes one component while retaining the training data and optimization protocol of the full model.

\begin{table*}[t]
\centering
\caption{Architectural ablations on the test split and benchmark:
(a) communication edges and (b) nominal angle references. Random rewiring preserves edge counts and endpoint types per relation and network. Training data and optimization remain fixed.
Best values are bold.}
\label{tab:ablations}
\small
\setlength{\tabcolsep}{3pt}
\renewcommand{\arraystretch}{1.1}
\resizebox{\textwidth}{!}{%
\begin{tabular}{cl*{14}{r}}
\toprule
& & \multicolumn{4}{c}{State estimation}
& Phase & \multicolumn{3}{c}{Switch} & \multicolumn{6}{c}{Fault} \\
\cmidrule(lr){3-6}\cmidrule(lr){7-7}
\cmidrule(lr){8-10}\cmidrule(lr){11-16}
Split & Configuration
& p.u. $\downarrow$ & $v_{\mathrm{mag}}$ (V) $\downarrow$
& $v_{\mathrm{ang}}$ (deg) $\downarrow$ & MAPE (\%) $\downarrow$
& Acc. $\uparrow$ & Prec. $\uparrow$ & Rec. $\uparrow$ & F1 $\uparrow$
& Det. F1 $\uparrow$ & class$_5$ $\uparrow$ & Loc. $\uparrow$
& Hop@1 $\uparrow$ & Hop@2 $\uparrow$ & Hop@3 $\uparrow$ \\
\midrule
\multirow{3}{*}{\shortstack[l]{(a) Test}} & No communication & 0.00370 & \textbf{4.99} & 1.286 & 0.240 & \textbf{0.6611} & 0.1950 & \textbf{0.9800} & \underline{0.3253} & \textbf{1.0000} & 0.4540 & 0.1819 & \textbf{0.2743} & \textbf{0.3464} & \textbf{0.3822} \\
 & Random communication & \textbf{0.00316} & 5.24 & \textbf{1.083} & \textbf{0.230} & 0.6328 & 0.9631 & 0.9056 & \textbf{0.9334} & \textbf{1.0000} & 0.4241 & \textbf{0.1881} & 0.2628 & 0.3267 & 0.3583 \\
 & Mycelium 10k & 0.00345 & 5.99 & 1.107 & 0.234 & 0.5713 & \textbf{0.9960} & 0.8137 & 0.8957 & \textbf{1.0000} & \textbf{0.4840} & 0.1710 & 0.2369 & 0.2999 & 0.3314 \\
\midrule
\multirow{3}{*}{Benchmark} & No communication & 0.01866 & \textbf{77.46} & 1.639 & 5.955 & \textbf{0.4256} & 0.7945 & \textbf{0.9968} & 0.8842 & \textbf{1.0000} & 0.7143 & 0.1878 & 0.2950 & 0.3492 & 0.4034 \\
 & Random communication & 0.01805 & 78.24 & \textbf{1.543} & 5.773 & 0.4231 & 0.8721 & 0.9910 & \textbf{0.9278} & 0.9993 & 0.6468 & \textbf{0.1971} & 0.2857 & 0.3135 & 0.3810 \\
 & Mycelium 10k & \textbf{0.01267} & 81.28 & 1.635 & \textbf{5.425} & 0.4001 & \textbf{0.9553} & 0.7198 & 0.8210 & \textbf{1.0000} & \textbf{0.7209} & 0.1958 & \textbf{0.3175} & \textbf{0.3558} & \textbf{0.4114} \\
\midrule
\multirow{2}{*}{\shortstack[l]{(b) Test}} & No angle reference & \textbf{0.00333} & \textbf{5.02} & \underline{16.461} & 0.239 & \textbf{0.6272} & 0.9914 & \textbf{0.8939} & \textbf{0.9401} & \textbf{1.0000} & \textbf{0.4970} & \textbf{0.1774} & \textbf{0.2484} & \textbf{0.3190} & \textbf{0.3497} \\
 & Mycelium 10k & 0.00345 & 5.99 & \textbf{1.107} & \textbf{0.234} & 0.5713 & \textbf{0.9960} & 0.8137 & 0.8957 & \textbf{1.0000} & 0.4840 & 0.1710 & 0.2369 & 0.2999 & 0.3314 \\
\midrule
\multirow{2}{*}{Benchmark} & No angle reference & \textbf{0.01054} & \textbf{73.97} & \underline{24.896} & \textbf{5.265} & \textbf{0.4203} & 0.8193 & \textbf{0.9738} & \textbf{0.8899} & \textbf{1.0000} & 0.7011 & 0.1786 & 0.2804 & 0.3360 & 0.3968 \\
 & Mycelium 10k & 0.01267 & 81.28 & \textbf{1.635} & 5.425 & 0.4001 & \textbf{0.9553} & 0.7198 & 0.8210 & \textbf{1.0000} & \textbf{0.7209} & \textbf{0.1958} & \textbf{0.3175} & \textbf{0.3558} & \textbf{0.4114} \\
\bottomrule
\end{tabular}%
}
\end{table*}

\paragraph{Communication graph.}
Table~\ref{tab:ablations} distinguishes the effects of shortcut connectivity and placement. Removing shortcuts reduces test switch F1 from 0.8957 to 0.3253, but random rewiring outperforms the proposed placement on switch F1 on both splits: 0.9334 versus 0.8957 on test and 0.9278 versus 0.8210 on the benchmark. The proposed placement achieves lower benchmark per-unit voltage error and higher fault type accuracy than random rewiring. These results support the utility of added connectivity for test-set switch inference, but do not establish a general advantage of the proposed placement.

\paragraph{Nominal angle reference.}
Removing the nominal angle reference while keeping the architecture and training protocol fixed increases angle error from 1.107 to 16.461 degrees on the test split and from 1.635 to 24.896 degrees on the frozen benchmark (Table~\ref{tab:ablations}). Thus, the reference reduces angle error approximately fifteenfold on both splits.
% The reference thus improves angle estimation approximately fifteenfold on both splits, although its inclusion introduces measurable tradeoffs in voltage magnitude estimation, phase attribution, and switch state inference.

\subsection{Data Scaling and Grid Size Generalization}

% We study synthetic-network augmentation and the maximum network size seen during training. Both analyses use the 10{,}000-bus model trained with the full synthetic corpus as a common reference.

\begin{table*}[t]
\centering
\caption{Data scaling on the test split and benchmark:
(a) synthetic corpus fractions added to the full reference set with a 10{,}000-bus cap, and (b) training network size caps with full synthetic augmentation. Mycelium 10k is the shared reference, and evaluation sets remain fixed. Best values are bold.}
\label{tab:data_scaling}
\small
\setlength{\tabcolsep}{3pt}
\renewcommand{\arraystretch}{1.1}
\resizebox{\textwidth}{!}{%
\begin{tabular}{cl*{14}{r}}
\toprule
& & \multicolumn{4}{c}{State estimation}
& Phase & \multicolumn{3}{c}{Switch} & \multicolumn{6}{c}{Fault} \\
\cmidrule(lr){3-6}\cmidrule(lr){7-7}
\cmidrule(lr){8-10}\cmidrule(lr){11-16}
Split & Configuration
& p.u. $\downarrow$ & $v_{\mathrm{mag}}$ (V) $\downarrow$
& $v_{\mathrm{ang}}$ (deg) $\downarrow$ & MAPE (\%) $\downarrow$
& Acc. $\uparrow$ & Prec. $\uparrow$ & Rec. $\uparrow$ & F1 $\uparrow$
& Det. F1 $\uparrow$ & class$_5$ $\uparrow$ & Loc. $\uparrow$
& Hop@1 $\uparrow$ & Hop@2 $\uparrow$ & Hop@3 $\uparrow$ \\
\midrule
\multirow{3}{*}{\shortstack[l]{(a) Test}} & Reference only
& 0.00881 & 7.63 & \textbf{0.933} & 0.320
& 0.5846 & \textbf{0.9992} & 0.9178 & 0.9568
& \textbf{1.0000} & 0.3455 & 0.1526 & 0.2305 & 0.2899 & 0.3249 \\
 & +25\% synthetic
& \textbf{0.00302} & \textbf{4.45} & 1.116 & \textbf{0.197}
& \textbf{0.5922} & 0.9806 & \textbf{0.9542} & \textbf{0.9672}
& \textbf{1.0000} & 0.4708 & \textbf{0.1839} & \textbf{0.2820} & \textbf{0.3437} & \textbf{0.3721} \\
 & Mycelium 10k
& 0.00345 & 5.99 & 1.107 & 0.234
& 0.5713 & 0.9960 & 0.8137 & 0.8957
& \textbf{1.0000} & \textbf{0.4840} & 0.1710 & 0.2369 & 0.2999 & 0.3314 \\
\midrule
\multirow{3}{*}{Benchmark} & Reference only
& 0.04025 & 148.88 & 2.388 & 7.714
& 0.3539 & 0.9445 & 0.9946 & 0.9689
& \textbf{1.0000} & 0.2937 & 0.1323 & 0.1799 & 0.2249 & 0.3016 \\
 & +25\% synthetic
& 0.01991 & 141.51 & \textbf{1.134} & 5.669
& 0.3886 & \textbf{0.9585} & \textbf{0.9958} & \textbf{0.9768}
& \textbf{1.0000} & 0.6111 & 0.1429 & 0.2156 & 0.2500 & 0.3056 \\
 & Mycelium 10k
& \textbf{0.01267} & \textbf{81.28} & 1.635 & \textbf{5.425}
& \textbf{0.4001} & 0.9553 & 0.7198 & 0.8210
& \textbf{1.0000} & \textbf{0.7209} & \textbf{0.1958} & \textbf{0.3175} & \textbf{0.3558} & \textbf{0.4114} \\
\midrule
\multirow{3}{*}{\shortstack[l]{(b) Test}} & 5{,}000 buses
& \textbf{0.00322} & \textbf{5.21} & 1.198 & \textbf{0.224}
& 0.5776 & 0.9716 & \textbf{0.9042} & \textbf{0.9367}
& \textbf{1.0000} & 0.4479 & 0.1676 & 0.2501 & 0.3034 & 0.3315 \\
 & 20{,}000 buses
& 0.00393 & 6.96 & \textbf{0.979} & 0.289
& \textbf{0.6150} & 0.9800 & 0.8848 & 0.9299
& 0.9996 & \textbf{0.5837} & \textbf{0.1918} & \textbf{0.2793} & \textbf{0.3524} & \textbf{0.3947} \\
 & Mycelium 10k
& 0.00345 & 5.99 & 1.107 & 0.234
& 0.5713 & \textbf{0.9960} & 0.8137 & 0.8957
& \textbf{1.0000} & 0.4840 & 0.1710 & 0.2369 & 0.2999 & 0.3314 \\
\midrule
\multirow{3}{*}{Benchmark} & 5{,}000 buses
& \textbf{0.00963} & \textbf{73.11} & \textbf{1.071} & \textbf{5.177}
& 0.4234 & 0.9503 & \textbf{0.9934} & \textbf{0.9713}
& \textbf{1.0000} & 0.6852 & 0.1799 & 0.3016 & 0.3267 & 0.3955 \\
 & 20{,}000 buses
& 0.01063 & 75.72 & 1.456 & 5.332
& \textbf{0.4387} & 0.9258 & 0.5574 & 0.6959
& 0.9980 & 0.7063 & \textbf{0.1971} & 0.2897 & 0.3399 & 0.4008 \\
 & Mycelium 10k
& 0.01267 & 81.28 & 1.635 & 5.425
& 0.4001 & \textbf{0.9553} & 0.7198 & 0.8210
& \textbf{1.0000} & \textbf{0.7209} & 0.1958 & \textbf{0.3175} & \textbf{0.3558} & \textbf{0.4114} \\
\bottomrule
\end{tabular}%
}
\end{table*}

\paragraph{Synthetic network scaling.}
All runs include the full reference training set. We add either no synthetic networks, a nested 25\% subset matched to the full corpus's network size distribution, or the full synthetic corpus. Architecture, optimization, and training budget are held fixed, and all runs are evaluated on the same unseen reference networks.
% Table~\ref{tab:data_scaling}(a) shows substantial cross-family gains from synthetic augmentation. Relative to reference-only training, the full synthetic corpus reduces benchmark per-unit voltage error by 69\%, raises fault-type accuracy from 0.2937 to 0.7209, and improves phase accuracy from 0.3539 to 0.4001. Notably, adding just 25\% of the synthetic networks nearly halves voltage error on the benchmark and yields the best test-set voltage error, switch F1, and exact fault-localization accuracy. The improvement from 25\% synthetic augmentation suggests that additional network diversity helps the model learn electrical relationships that transfer to unseen reference networks, even when the generated networks differ from the test distribution.
{Synthetic augmentation improves several cross-family metrics, with nonmonotonic gains across tasks.}
Table~\ref{tab:data_scaling}(a) shows that the full synthetic corpus reduces benchmark per-unit voltage error by 69\% relative to reference-only training and raises fault type accuracy from 0.2937 to 0.7209. The 25\% subset instead achieves the best test-set voltage magnitude errors, switch F1, and exact localization. On the benchmark, switch F1 falls from 0.9768 at 25\% to 0.8210 with the full corpus. Additional synthetic coverage therefore benefits several tasks without uniformly improving performance.

\paragraph{Maximum training network size.}
We train models with network size caps of 5{,}000, 10{,}000, and 20{,}000 buses, keeping the evaluation sets unchanged. The 20{,}000-bus run requires a batch size of four rather than eight, so this comparison varies effective batch size. Larger training networks do not uniformly improve generalization. Table~\ref{tab:data_scaling}(b) shows that the 20{,}000-bus run achieves the highest test-set phase accuracy, fault type accuracy, and exact fault localization accuracy, but has the highest test-set voltage magnitude error. On the frozen benchmark, the 5{,}000-bus run has the lowest state estimation errors and highest switch F1, whereas the 20{,}000-bus run has the highest phase accuracy. Exposure to larger networks therefore does not improve every task, and the batch size difference prevents attributing these changes solely to network size.
\subsection{Multi-task vs. Single-task}
\label{sec:single-task}
\begin{table*}[t]
\centering
\caption{Joint versus single-task training with the same backbone on the test split and frozen benchmark. Dashes mark untrained tasks. Lower is better for state estimation errors and higher otherwise. Best values per split are bold.}
\label{tab:single_vs_multitask}
\resizebox{\textwidth}{!}{%
\begin{tabular}{llcccccccccccccc}
\toprule
& \textbf{Run} &
\multicolumn{4}{c}{\textbf{State estimation}} & \textbf{Phase} &
\multicolumn{3}{c}{\textbf{Switch}} & \multicolumn{6}{c}{\textbf{Fault}} \\
\cmidrule(lr){3-6}\cmidrule(lr){7-7}\cmidrule(lr){8-10}\cmidrule(lr){11-16}
Split & &
p.u. $\downarrow$ &
$v_{\mathrm{mag}}$ (V) $\downarrow$ &
$v_{\mathrm{ang}}$ (deg) $\downarrow$ &
$v_{\mathrm{mag}}$ MAPE (\%) $\downarrow$ &
acc $\uparrow$ &
prec $\uparrow$ & rec $\uparrow$ & F1 $\uparrow$ &
det F1 $\uparrow$ & class$_5$ $\uparrow$ & exact $\uparrow$ &
hop@1 $\uparrow$ & hop@2 $\uparrow$ & hop@3 $\uparrow$ \\
\midrule
\multirow{5}{*}{Test}
& SE only &
  \textbf{0.00148} & \textbf{3.33} & \textbf{1.057} & \textbf{0.148} &
  - & - & - & - & - & - & - & - & - & - \\
& Phase only &
  - & - & - & - &
  0.5363 &
  - & - & - & - & - & - & - & - & - \\
& Switch only &
  - & - & - & - & - &
  0.8325 & \textbf{0.9374} & 0.8818 &
  - & - & - & - & - & - \\
& Fault only &
  - & - & - & - & - & - & - & - &
  0.7990 & \textbf{0.5322} & \textbf{0.1927} & \textbf{0.2731} & \textbf{0.3441} & \textbf{0.3768} \\
& Mycelium 10k &
  0.00345 & 5.99 & 1.107 & 0.234 &
  \textbf{0.5713} &
  \textbf{0.9960} & 0.8137 & \textbf{0.8957} &
  \textbf{1.0000} & 0.4840 & 0.1710 & 0.2369 & 0.2999 & 0.3314 \\
\midrule
\multirow{5}{*}{Benchmark}
& SE only &
  0.07072 & \textbf{21.51} & 7.038 & 12.067 &
  - & - & - & - & - & - & - & - & - & - \\
& Phase only &
  - & - & - & - &
  0.3683 &
  - & - & - & - & - & - & - & - & - \\
& Switch only &
  - & - & - & - & - &
  0.7870 & \textbf{0.9961} & \textbf{0.8793} &
  - & - & - & - & - & - \\
& Fault only &
  - & - & - & - & - & - & - & - &
  0.8405 & 0.6746 & \textbf{0.2169} & \textbf{0.3188} & 0.3320 & 0.3889 \\
& Mycelium 10k &
  \textbf{0.01267} & 81.28 & \textbf{1.635} & \textbf{5.425} &
  \textbf{0.4001} &
  \textbf{0.9553} & 0.7198 & 0.8210 &
  \textbf{1.0000} & \textbf{0.7209} & 0.1958 & 0.3175 & \textbf{0.3558} & \textbf{0.4114} \\
\bottomrule
\end{tabular}%
}
\end{table*}
We compare the cross-task Mycelium model with four models that use the same backbone but receive supervision for only one task. Each single-task model is evaluated on its trained task. Joint training provides selective cross-family benefits.
Table~\ref{tab:single_vs_multitask} reveals a reversal in state estimation performance across splits. The single-task model has lower test per-unit error than joint training, 0.00148 versus 0.00345, but substantially higher benchmark error, 0.07072 versus 0.01267. Joint training also improves phase accuracy on both splits and benchmark fault type accuracy. Its benefits remain selective: single-task training retains lower benchmark voltage error in volts, higher benchmark switch F1, and better exact fault localization on both splits. Shared supervision therefore improves some aspects of cross-family generalization while leaving advantages for specialization.

\subsection{Single-task finetuning}
\label{sec:finetune_scope}
We finetune Mycelium 10k using only the target task loss. Head-only runs freeze the encoders and backbone and use reference networks. Full-model runs update all parameters using reference and synthetic networks, except for switch inference, which uses reference networks in both settings. For the other tasks, parameter scope and data composition therefore vary together.
\begin{figure}[t]
\centering
\includegraphics[width=\linewidth]{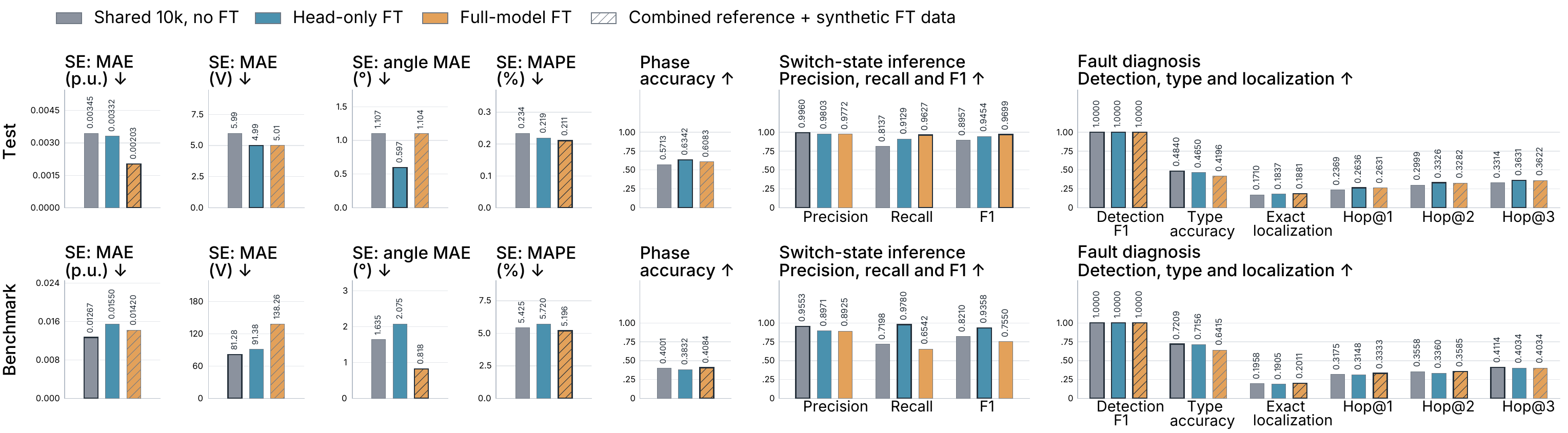}
\caption{Single-task finetuning on the test split and frozen benchmark, shown in the top and bottom rows. Hatching denotes combined reference and synthetic finetuning data. Lower is better for state estimation errors and higher is better otherwise.}
\label{fig:finetune_all_tasks}
\end{figure}
Figure~\ref{fig:finetune_all_tasks} shows that finetuning improves test-set state estimation, phase accuracy, switch F1, and fault localization but reduces fault type accuracy. These gains do not consistently transfer to the benchmark. Head-only switch adaptation raises benchmark F1 from 0.8210 to 0.9358, whereas full-model adaptation reaches a higher test F1 (0.9699) but lowers benchmark F1 to 0.7550. This suggests the pretrained backbone encodes a powerful transferable representation that a frozen feature head preserves and full finetuning overwrites. Yet head-only phase adaptation also trades benchmark accuracy for test accuracy, so task specialization can improve in-family performance without improving cross-family generalization, even with a frozen backbone.

\Needspace{10\baselineskip}
\begin{samepage}
\section{Conclusion}

We introduced Mycelium, a shared model for four inference tasks across distribution grids. Synthetic augmentation and joint supervision improve selected cross-family metrics, while nominal angle references substantially reduce angle error. Communication edge placement and task-specific finetuning introduce tradeoffs, and gains on familiar source families do not consistently extend to excluded families. These results identify both benefits and limits of shared learning across grids.

\end{samepage}

\bibliography{iclr2027_conference}
\bibliographystyle{iclr2027_conference}

\appendix
% \section{Appendix}
% You may include other additional sections here.

\section{Data}

\subsection{Reference Data Sources}
\label{app:reference_data_sources}
The reference arm of the corpus comprises publicly available distribution network models from four source families:
(a) \textbf{NREL SMART-DS:} networks covering the Austin, San Francisco, and Greensboro regions, representing realistic synthetic U.S.\ distribution systems with multiple medium voltage classes and detailed secondary networks \citep{OEDI_Dataset_2981}.
(b) \textbf{Standard benchmark networks:} EPRI circuits \footnote{EPRI OpenDSS Test Circuits: \url{https://sourceforge.net/p/electricdss/code/HEAD/tree/trunk/Distrib/EPRITestCircuits/}} (\texttt{ckt5}, \texttt{ckt7}, and \texttt{ckt24}),
the IEEE test feeder suite, including the IEEE European low voltage
network \citep{schneider2017analytic} and D-Suite networks \citep{Deakin2025}.
(c) \textbf{Iowa State test systems:} 240- and 296-bus real distribution systems released as OpenDSS models with one year of smart meter data \citep{bu2019time}.\footnote{Iowa Distribution
Test Systems: \url{https://wzy.ece.iastate.edu/testsystem.html}}
(d) \textbf{Nonsynthetic European systems:} utility derived networks based on GIS and smart meter data, including a hybrid MV/LV industrial network and large rural and urban LV networks, the largest containing 6,738 buses \citep{taye2024set}.

\subsection{Structural Diversity of Reference Networks}
\label{app:network_characteristics}
Figure~\ref{fig:network_characteristics} characterizes the reference corpus by bus count, independent loop count, electrical extent, and customer density. The benchmark networks occupy distinct parts of these distributions: the Iowa networks have relatively small bus counts and electrical extents, whereas the European networks lie at the 96th-100th percentiles in independent loop count. The urban network is near the median in bus count but at the 100th percentile in customer density. These contrasts highlight structural differences that network size alone does not capture.

\begin{figure*}[t]
\centering
\includegraphics[width=\textwidth]{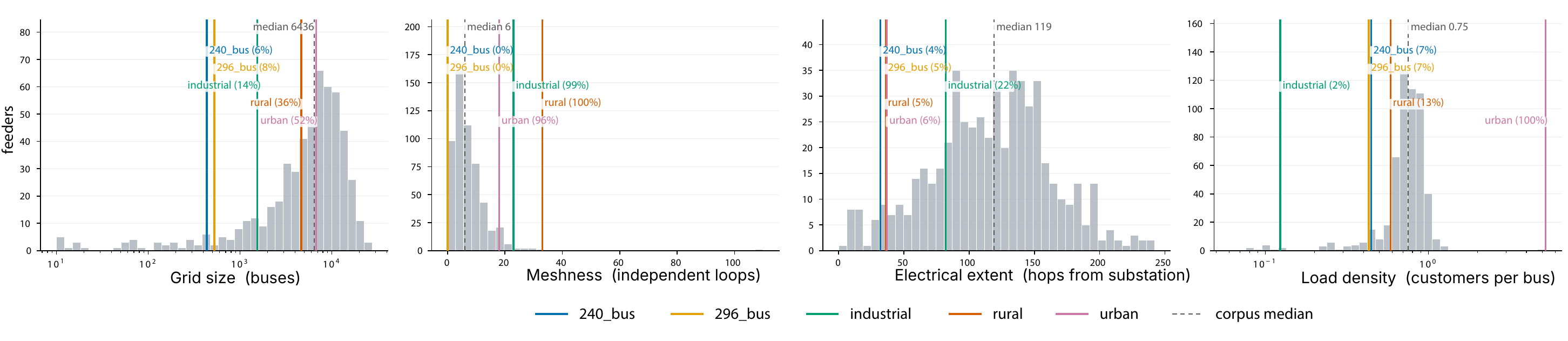}
\caption{Structural characteristics of the reference corpus and the five frozen benchmark networks: bus count (a), independent loop count (b), electrical extent in hops from the substation (c), and customer density (d). Histograms show corpus distributions, colored vertical lines mark benchmark networks, and dashed lines indicate corpus medians. The table reports benchmark percentile ranks. Bus count and customer density use logarithmic horizontal axes.}
\label{fig:network_characteristics}
\end{figure*}

\subsection{Synthetic Network Generation}
\label{app:synthetic_generation}

Algorithm~\ref{alg:synthetic_generation} summarizes the procedure described below. We define $\mathcal D_{\mathrm{ref}}$ as the training subset of the reference network corpus. The component library, customer load records, and source specific or pooled planning statistics used by the generator are derived exclusively from this subset. No frozen benchmark network contributes to synthetic network generation.

We construct synthetic distribution networks and operating profiles by sampling geographic layouts, equipment conventions, and customer demand, then checking the resulting electrical behavior in OpenDSS~\citep{dugan2011open}. The procedure combines geographic feeder synthesis~\citep{saha2019framework} with empirical component sampling and simulation guided repair. Component parameters, customer load records, and planning statistics are obtained from a reference corpus $\mathcal{D}_{\text{ref}}$. Operating profiles are drawn from the datasets described below. Each accepted case contains an electrical network and one 24-hour operating window.

\paragraph{Geography and component sampling.}
Street segments and building footprints from Overture Maps~\citep{overture2026} provide candidate cable routes and customer locations. We sample a territory, choose a feeder head location, and snap it to the nearest street node. We keep the streets reachable from that node and select nearby buildings to serve. A component library built from $\mathcal{D}_{\text{ref}}$ supplies conductor impedances and ratings, transformer parameters, customer load records, and control settings. A sampled source dataset convention specifies customers per transformer, transformer utilization, secondary voltage classes, and device prevalence. When dataset specific statistics are unavailable, we use pooled values from $\mathcal{D}_{\text{ref}}$. Load records are sampled within single- and three-phase classes, with source weights tempered to reduce domination by larger datasets. Groups identified as placeholder or aggregate spot loads are excluded. Geography and the source dataset convention are sampled separately.

\paragraph{Network construction.}
We cluster customers spatially into service areas, each supplied by one distribution transformer. The initial cluster count is determined by the sampled number of customers per transformer from $\mathcal{D}_{\text{ref}}$. We increase this count when capacity, reach, or estimated voltage drop checks fail. Transformers are placed near each cluster's demand weighted centroid. Eligible clusters containing only single-phase customers receive center-tapped transformers with individual service drops. The remaining clusters use three-phase units and shortest path street routing. Approximate Steiner trees~\citep{mehlhorn1988} connect transformer and substation locations at higher voltage levels. Downstream per phase demand determines conductor and transformer sizing. The construction also samples sectionalizing switches, normally open ties, capacitors, voltage controls, and rooftop PV. These choices vary network size, branching, phase composition, and equipment. At this stage, each customer has a nominal demand. Time series are assigned next.

\paragraph{Operating profiles and load coincidence.}
Nominal active and reactive demands are sampled from the component library and rescaled to a sampled network demand level. We assign residential ResStock profiles to single-phase loads and commercial ComStock profiles to three-phase loads~\citep{wilson2022}. Profiles are drawn from a sampled climate group, with annual peak ranks matched to nominal load sizes. Let $\bar p_i$ be customer $i$'s fixed nominal active demand and $a_i(h)$ its assigned reference building demand profile in kW, where $h\in\{1,\ldots,8760\}$ indexes the hours of the year. We define the synthetic customer's nominal demand schedule as
\begin{equation}
    p_i(h)=\bar p_i\frac{a_i(h)}{\max_{h'}a_i(h')},
    \qquad
    \eta_c=\frac{\max_h\sum_{i\in c}p_i(h)}
                 {\sum_{i\in c}\bar p_i},
    \label{eq:synthetic_coincidence}
\end{equation}
where $c$ is the set of customers served by one transformer. The reference profile supplies the temporal pattern, while $\bar p_i$ sets its annual peak. The coincidence factor $\eta_c$ accounts for customers reaching their individual peaks at different times. A common date from a sampled season defines the 24-hour window. Reactive power and PV output profiles are drawn from SMART-DS~\citep{OEDI_Dataset_2981}. Reactive profiles are normalized by their annual peaks and scaled by nominal reactive demand. PV profiles combine a common solar trajectory with site level variation. Geography, load climate, and solar region are sampled separately rather than jointly matched.

\paragraph{Snapshot simulation.}
We first construct a static OpenDSS model from the sampled network and nominal customer demands. At this stage, power flow is solved in \emph{snapshot mode}: one operating point is evaluated at a time. Annual load profiles are used to compute the coincidence factors $\eta_c$, but are not yet applied as a time stepped simulation.

An operating input $\mathbf{U}=(\mathbf p,\mathbf q,\boldsymbol\alpha)$ specifies the nominal active and reactive demands of all customers and the availability of DER generation. A DER multiplier $\alpha_j=0$ disables generation at device $j$, while $\alpha_j=1$ specifies full availability under its configured device model. With $c(i)$ denoting the transformer service area containing customer $i$, the two stress inputs are
\begin{equation}
\begin{aligned}
\mathbf{U}_{\mathrm A}
&=\left((\eta_{c(i)}\bar p_i)_i,\;(\eta_{c(i)}\bar q_i)_i,\;\mathbf 0\right),\\
\mathbf{U}_{\mathrm B}
&=\left(\lambda\bar{\mathbf p},\;\lambda\bar{\mathbf q},\;\mathbf 1\right),
\qquad \lambda\sim\mathrm{Uniform}(0.2,0.4),
\end{aligned}
\label{eq:synthetic_snapshot_inputs}
\end{equation}
where $\bar q_i$ is nominal reactive demand. Thus, $\mathbf{U}_{\mathrm A}$ represents coincident peak demand with DER generation disabled, while $\mathbf{U}_{\mathrm B}$ represents reduced demand with full DER availability. The multiplier $\lambda$ is drawn once per proposed network and retained during repair.

The call $Y=\operatorname{SnapshotPF}(\mathcal G,\mathbf c;\mathbf{U})$ applies one such input to the network and runs an unbalanced OpenDSS snapshot solve. Here $\mathbf c$ contains equipment parameters and configurations. The returned diagnostics $Y$ include convergence status, bus voltages, line currents, and line and transformer loading ratios. We solve the two operating points separately to obtain $Y_{\mathrm A}$ and $Y_{\mathrm B}$.

\paragraph{Repairs to the proposed network.}
Repairs are offline edits to the synthetic simulator model during data generation. The editable variables are customer phase assignments, transformer taps and ratings, conductor multiplicities, capacitor and regulator placement and ratings, and DER inverter modes and output limits. Let $\mathcal R_{\mathrm C}$ denote the ordered snapshot repair schedule and $\mathcal R_{\mathrm D}$ the daily repair schedule in Table~\ref{tab:synthetic_repairs}. Each scheduled operation $r$ has a trigger based on the current diagnostics and a repetition limit. When triggered, it edits the affected variables in $(\mathcal G,\mathbf c)$. For example, a line overload can trigger an additional parallel conductor, whereas overvoltage can trigger a tap change or DER curtailment. Repairs for overloading are enabled only when the thermal enforcement flag $b$ is set. After every edit, the relevant operating points are solved again: improving one can worsen another.

\begin{table}[!htbp]
\centering
\small
\setlength{\tabcolsep}{4pt}
\renewcommand{\arraystretch}{1.15}
\caption{Repair actions used during synthetic network generation. Actions apply to the affected service areas or branches, subject to the sampled design's equipment options. Snapshot repairs also seek voltage headroom beyond the final acceptance limits.}
\label{tab:synthetic_repairs}
\begin{tabularx}{\linewidth}{@{}>{\raggedright\arraybackslash}p{0.18\linewidth}>{\raggedright\arraybackslash}X>{\raggedright\arraybackslash}p{0.22\linewidth}@{}}
\toprule
Trigger & Change to the proposed model & Repetition limit \\
\midrule
Low snapshot voltage &
Rebalance single-phase loads where connections permit; adjust transformer taps; add capacitors or regulators; add parallel conductors along affected paths; select a larger transformer. &
Ordered passes; up to 4 conductor reinforcement and 2 transformer upgrade rounds. \\
High snapshot voltage &
Enable inverter volt-var control, back off voltage boosting taps where the low voltage condition permits, reinforce conductors, or reduce affected DER output limits by 20\%. &
Up to 5 rounds. \\
Snapshot overload, with thermal enforcement &
Add parallel conductor circuits, choose a larger transformer, add parallel substation units, or increase regulator capacity. &
Up to 3 rounds. \\
Daily voltage violation &
Adjust taps or reinforce affected paths for undervoltage; use inverter control, tap backoff, phase rebalancing, or DER curtailment for overvoltage. &
Up to 2 undervoltage and 8 overvoltage rounds. \\
\bottomrule
\end{tabularx}
\end{table}

\paragraph{Daily simulation and acceptance.}
After snapshot repair, we attach the selected 24-hour load and DER generation profiles. The call $Y_{1:24}=\operatorname{DailyPF}(\mathcal G,\mathbf c;\mathbf{U}_{1:24})$ switches OpenDSS to daily mode and performs 24 sequential one-hour solves, returning diagnostics at each step. The daily repairs in Table~\ref{tab:synthetic_repairs} address violations identified along this trajectory. After each change, both snapshot conditions and the complete day are checked again. The load profiles and sampled multipliers remain fixed during these repairs.

Acceptance requires power flow convergence, a minimum voltage of at least $0.95$~p.u. in snapshot A, and voltages within $[0.95,1.05]$~p.u. in snapshot B and throughout the sampled day. Daily voltage checks omit values at or below $0.05$~p.u. to exclude floating tie nodes. An entirely collapsed step is rejected. For each proposed network, thermal enforcement is enabled with probability $0.7$ and remains fixed during repair. When enabled, line and transformer ratings are checked in both snapshots, and line current ratings are checked at every timestep of the day. Cases that still fail the applicable checks after the allowed repairs are rejected.

\begin{algorithm}[!htb]
\small
\DontPrintSemicolon
\SetAlgoNlRelativeSize{-1}
\SetCommentSty{emph}
\caption{Synthetic network generation}
\label{alg:synthetic_generation}
\KwIn{Reference corpus $\mathcal D_{\mathrm{ref}}$; street and building maps; load and DER generation profiles $\mathcal P$; seeds $\mathcal S$; repair schedules $\mathcal R_{\mathrm C},\mathcal R_{\mathrm D}$ defined in the text and Table~\ref{tab:synthetic_repairs}.}
\KwOut{Accepted networks with 24 validated hourly operating points.}
Generate component library $\mathcal L$ and occurrence statistics from $\mathcal D_{\mathrm{ref}}$ using OpenDSS\;
\tcp{An unrecovered error or timeout records failure and skips the current seed.}
\ForEach{$s\in\mathcal S$}{
    Initialize the random generator with seed $s$\;
    Sample territory, feeder head location, and equipment nameplate parameters\;
    Draw $\lambda\sim\mathrm{Uniform}(0.2,0.4)$ and $b\sim\mathrm{Bernoulli}(0.7)$\;
    \tcp{Stage 1: construct and repair a snapshot model.}
    Keep streets reachable from the feeder head and select nearby customer locations\;
    Assign customer phase classes, then sample and rescale nominal demands $(\bar p_i,\bar q_i)$ from $\mathcal L$\;
    Sample DER presence, capacities, and inverter settings from $\mathcal D_{\mathrm{ref}}$\;
    Cluster customers, site transformers, and route service and upstream connections\;
    Assign phases and size equipment\;
    Add sectionalizing switches and normally open ties to obtain $(\mathcal G,\mathbf c)$\;
    Build the OpenDSS model in snapshot mode with nominal demands $(\bar{\mathbf p},\bar{\mathbf q})$\;
    Use annual profiles from $\mathcal P$ to compute $\eta_c$ in Eq.~\eqref{eq:synthetic_coincidence}\;
    Form the snapshot inputs $\mathbf{U}_{\mathrm A},\mathbf{U}_{\mathrm B}$ in Eq.~\eqref{eq:synthetic_snapshot_inputs}\;
    $Y_{\mathrm A}\leftarrow\operatorname{SnapshotPF}(\mathcal G,\mathbf c;\mathbf{U}_{\mathrm A})$;
    $Y_{\mathrm B}\leftarrow\operatorname{SnapshotPF}(\mathcal G,\mathbf c;\mathbf{U}_{\mathrm B})$\;
    \ForEach{triggered operation $r$ in $\mathcal R_{\mathrm C}$, within its repetition limit}{
        $(\mathcal G,\mathbf c)\leftarrow r(\mathcal G,\mathbf c;Y_{\mathrm A},Y_{\mathrm B},b)$\;
        Solve both snapshots again to update $Y_{\mathrm A},Y_{\mathrm B}$\;
    }
    \tcp{Stage 2: attach time series and validate one day.}
    Select a common date and construct load and DER generation inputs $\mathbf{U}_{1:24}$\;
    $Y_{1:24}\leftarrow\operatorname{DailyPF}(\mathcal G,\mathbf c;\mathbf{U}_{1:24})$\;
    \ForEach{triggered operation $r$ in $\mathcal R_{\mathrm D}$, within its repetition limit}{
        $(\mathcal G,\mathbf c)\leftarrow r(\mathcal G,\mathbf c;Y_{\mathrm A},Y_{\mathrm B},Y_{1:24},b)$\;
        Solve both snapshots and all 24 hourly steps again to update $Y_{\mathrm A},Y_{\mathrm B},Y_{1:24}$\;
    }
    \eIf{$\operatorname{Accept}(Y_{\mathrm A},Y_{\mathrm B},Y_{1:24};b)$}{
        Save $(\mathcal G,\mathbf c,\mathbf{U}_{1:24})$ as an accepted case\;
    }{
        Record rejection\;
    }
    Record seed, sampled parameters, repair history, and final diagnostics\;
}
\end{algorithm}

The predicate $\operatorname{Accept}$ applies the convergence, voltage, and conditional rating checks described above to the final snapshot and daily diagnostics. The flag $b$ controls thermal enforcement. The variables $\lambda$ and $b$ remain fixed throughout repair.

\paragraph{Outputs and scope.}
The generator saves self contained OpenDSS models, load and solar time series, sampled parameters, repair histories, and acceptance results. Observation noise and missingness are specified separately by the sensor policies in Appendix~\ref{app:sensor_policies}. Repair and rejection change the distribution of accepted cases relative to the sampling priors.

\section{Task Overview}
\label{app:task_overview}

We jointly infer electrical states, connectivity attributes, and fault
events from a network's equipment graph and partial, noisy measurements.
All four tasks share the same inputs. Labels and target validity masks
are used only for supervision and evaluation.

\subsection{Network Graph and Partial Observations}
\label{sec:task_graph}

We represent each network as a typed graph
\begin{equation}
    \mathcal{G}
    =
    (\mathcal{V},\mathcal{E},
    \tau_{\mathcal{V}},\tau_{\mathcal{E}}),
\end{equation}
with bus, consumer, substation, capacitor, and DER nodes. Relations
distinguish lines, transformers, reactors, and device to bus attachments.
All switchable branches remain in the graph regardless of their
operating state: the equipment inventory is known, while switch states
must be inferred.

Each sample contains $T=24$ consecutive hourly observations, indexed
by $\mathcal{T}=\{0,\ldots,T-1\}$. Entity
$u\in\mathcal{V}\cup\mathcal{E}$ has static attributes $\mathbf{s}_u$,
temporal measurements
$\mathbf{X}_u\in\mathbb{R}^{T\times d_{\tau(u)}}$, and an observation
mask $\mathbf{M}_u\in\{0,1\}^{T\times d_{\tau(u)}}$.
Static attributes describe equipment properties. Measurements include
available voltage, power, current, and feeder head telemetry.
Missing readings are zero filled and masked to distinguish them from
physical zeros.

The model receives
\begin{equation}
    \mathcal{O}
    =
    \left\{
        (\mathbf{M}_u\odot\mathbf{X}_u,\mathbf{M}_u,\mathbf{s}_u)
        : u\in\mathcal{V}\cup\mathcal{E}
    \right\},
\end{equation}
and jointly predicts
\begin{equation}
    f_{\boldsymbol{\theta}}(\mathcal{G},\mathcal{O})
    =
    \big(
        \widehat{\mathbf{V}},
        \widehat{\boldsymbol{\Theta}},
        \widehat{\mathbf{P}}^{\mathrm{phase}},
        \widehat{\mathbf{p}}^{\mathrm{sw}},
        \widehat{\mathbf{p}}^{\mathrm{fault}},
        \widehat{\mathbf{p}}^{\mathrm{loc}}
    \big).
    \label{eq:unified_prediction}
\end{equation}
Sensor policies vary measurement placement, density, noise, and
missingness without changing the task definitions.

\subsection{State Estimation}
\label{sec:task_state_estimation}

For each bus, timestep, and phase in
$\mathcal{P}=\{\mathrm{A},\mathrm{B},\mathrm{C}\}$,
the model predicts voltage magnitude and angle:
\begin{equation}
    \widehat{\mathbf{V}},\widehat{\boldsymbol{\Theta}}
    \in
    \mathbb{R}^{|\mathcal{V}_{\mathrm{bus}}|\times T\times 3}.
\end{equation}
A validity mask $\mathbf{M}^{\mathrm{state}}$ of the same shape
excludes absent phases and invalid targets. The task reconstructs
electrical states over the input window, with supervision and
evaluation restricted to valid (bus, timestep, phase) entries.

\subsection{Phase Attribution}
\label{sec:task_phase_attribution}

Phase attribution applies to eligible single-phase consumers whose
connection phase is not determined trivially by the local structure.
For each eligible consumer $j$, the model predicts a distribution
$\widehat{\mathbf{p}}^{\mathrm{phase}}_j\in[0,1]^3$ over the canonical
$\mathrm{A}$-$\mathrm{B}$-$\mathrm{C}$ ordering and assigns the
phase with the highest probability. The physical phase label remains constant
throughout the window.

Separate masks track structural eligibility and the availability
of at least one customer voltage observation, supporting evaluation
over either all eligible consumers or the observed subset.

\subsection{Switch State Inference}
\label{sec:task_switch_state}

For each switchable line $e\in\mathcal{E}_{\mathrm{sw}}$, the target
is $y_e^{\mathrm{sw}}=1$ for open and $0$ for closed. Equipment
metadata identifies switch locations. The model predicts open state
probabilities from network structure and available electrical
measurements, with operating state labels withheld. Switch states
remain constant within each window.

\subsection{Fault Classification and Conditional Localization}
\label{sec:task_fault}

Fault diagnosis combines graph level classification with equipment
localization. The model predicts a distribution over
\begin{equation}
    \mathcal{C}_{\mathrm{fault}}
    =
    \{
        \mathrm{normal},
        \mathrm{LG},
        \mathrm{LL},
        \mathrm{LLG},
        \mathrm{LLL},
        \mathrm{LLLG}
    \},
\end{equation}
where $\mathrm{normal}$ denotes no active fault.

For faulted samples, the location target identifies an element of
the joint candidate set
\begin{equation}
    \mathcal{A}_{\mathrm{fault}}
    =
    \mathcal{V}_{\mathrm{bus}}
    \mathbin{\dot{\cup}}
    \mathcal{E}_{\mathrm{line}}
    \mathbin{\dot{\cup}}
    \mathcal{E}_{\mathrm{xfmr}}.
\end{equation}
The model scores all candidate buses, lines, and transformers together
and predicts
$\widehat{y}^{\mathrm{loc}}
=\arg\max_{a\in\mathcal{A}_{\mathrm{fault}}}
\widehat{p}^{\mathrm{loc}}_a$.
Normal samples have no location target and contribute no localization
loss.

The current benchmark injects one fault at the final observed
timestep, $t=T-1$. Diagnosis therefore uses the pre event history
and initial fault response, without post fault recovery measurements.
Arbitrary onset times and sustained faults are outside this evaluation.

\subsection{Joint Task Applicability}
\label{sec:task_applicability}

A task applicability vector $\mathbf{a}_n\in\{0,1\}^4$ specifies
which tasks have valid targets for sample $n$. Combined with
entity level validity masks, it allows one window to supervise
multiple tasks. These supervision masks remain separate from the
observation masks in $\mathcal{O}$ and are never model inputs.
Every sample uses the same $(\mathcal{G},\mathcal{O})$ interface
without an explicit task identifier.

% Preamble: \usepackage{booktabs,graphicx}

% \begin{table}[t]
% \centering
% \caption{Prediction targets and temporal scope of the four tasks.}
% \label{tab:task_summary}
% \resizebox{\linewidth}{!}{%
% \begin{tabular}{@{}llll@{}}
% \toprule
% Task & Prediction scope & Target & Temporal scope \\
% \midrule
% State estimation
% & Bus-phase
% & Voltage magnitude and angle
% & Per timestep \\
% Phase attribution
% & Eligible consumer
% & $\mathrm{A}/\mathrm{B}/\mathrm{C}$ connection
% & Constant within window \\
% switch state inference
% & Switchable line
% & Open or closed
% & Constant within window \\
% Fault diagnosis
% & Graph and equipment
% & Fault class and conditional location
% & Final-timestep event \\
% \bottomrule
% \end{tabular}%
% }
% \end{table}
\begin{table}[t]
\centering
\small
\setlength{\tabcolsep}{4pt}
\caption{Prediction problems and temporal scope.}
\label{tab:task_summary}
\resizebox{\linewidth}{!}{%
\begin{tabular}{@{}l l l@{}}
\toprule
Task & ML formulation & Prediction scope \\
\midrule
State estimation (SE) & Continuous regression & Magnitude and angle per (bus, phase, timestep) \\
Phase attribution & Three-class classification & One A/B/C assignment per eligible consumer and window \\
Switch state inference & Binary classification & One open/closed state per switch and window \\
Fault diagnosis & Six-class classification; conditional localization & Normal/fault class per window; equipment location for faulted windows \\
\bottomrule
\end{tabular}%
}
\end{table}

\section{Model Details}
\label{app:model_details}

Time indices in this appendix run from $1$ to $T$, corresponding to $0$ to
$T-1$ in Appendix~\ref{app:task_overview}. The final observed timestep is
therefore $T$ here.

\subsection{Input Schema and Supervision Masks}
\label{app:input_schema}

Table~\ref{tab:input_schema} summarizes the node, relation, and auxiliary inputs. Static attributes are shared across a
sample's $T=24$ hourly steps, while temporal measurements retain their
time dimension. Phase resolved measurements follow the fixed
$(A,B,C)$ ordering. Consumer measurements are scalar channels without
an explicit consumer phase label. Capacitor and distributed energy
resource (DER) nodes are included when present in the network.
Features marked as optional are controlled by model configuration.

Switch eligible lines remain in the structural graph regardless of their
open/closed state. Customer assignments to transformers identify serving
equipment. They are distinct from the customer phase labels being predicted.
Transformer winding mappings likewise describe equipment metadata.

\begingroup
\small
\setlength{\tabcolsep}{4pt}
\renewcommand{\arraystretch}{1.05}
\setlength{\LTcapwidth}{\textwidth}
\begin{longtable}{@{}
 >{\raggedright\arraybackslash}p{0.18\textwidth}
 >{\raggedright\arraybackslash}p{0.43\textwidth}
 >{\raggedright\arraybackslash}p{\dimexpr0.39\textwidth-4\tabcolsep\relax}@{}}
\caption{Model inputs for nodes (a), relations (b), and auxiliary stores (c). Every temporal measurement has an observability mask. Connectivity only relations provide endpoint indices and relation identity without separate attribute vectors. DT denotes distribution transformer. Dashes indicate no direct temporal input. Auxiliary entries describe their inputs and role across the last two columns.}\label{tab:input_schema}\\
\toprule
\textbf{Type} & \textbf{Static inputs / connectivity} & \textbf{Temporal inputs} \\
\midrule
\endfirsthead
\caption[]{Model inputs (continued).}\\
\toprule
\textbf{Type} & \textbf{Static inputs / connectivity} & \textbf{Temporal inputs} \\
\midrule
\endhead
\midrule
\multicolumn{3}{r@{}}{Continued on the next page.}\\
\endfoot
\bottomrule
\endlastfoot
\multicolumn{3}{@{}l}{\textbf{(a) Nodes}} \\
\addlinespace[0.25em]
\textbf{Bus}
& Base voltage; source bus indicator; normalized coordinates and coordinate validity flag.
& Per phase voltage magnitude and angle; active power $P$ and reactive power $Q$. \\
\addlinespace[0.25em]
\textbf{Consumer}
& One-hot metering tier encoding.
& Customer voltage channels for phase attribution and voltage derived from the metering tier; active and reactive power. \\
\addlinespace[0.25em]
\textbf{Substation}
& Positive- and zero-sequence source resistance and reactance; three-phase and single-phase short-circuit strengths; base voltage; voltage setpoint; source angle; frequency.
& Feeder head voltage and active/reactive power when feeder head fusion is enabled. \\
\addlinespace[0.25em]
\textbf{Capacitor}
& Voltage and reactive power ratings; switching step count; delta connection and controller indicators; active step fraction.
& -- \\
\addlinespace[0.25em]
\textbf{DER}
& Device kind encoding for PV, generator, or storage; active/reactive power ratings; delta connection indicator.
& -- \\
\midrule
\multicolumn{3}{@{}l}{\textbf{(b) Relations}} \\
\addlinespace[0.25em]
\textbf{Line}\newline bus--bus
& Switch, fuse, and recloser indicators; length; normal/emergency current ratings; sequence capacitances and impedances; phase count. Full phase impedance matrices are optional.
& Per phase current magnitude; active/reactive power flows with extended measurements. \\
\addlinespace[0.25em]
\textbf{Transformer}\newline bus--bus
& Regulator, fuse, and recloser indicators; apparent power rating; turns ratio; winding count; no-load loss, magnetizing current, and interwinding reactance parameters; phase counts. Winding connections and phase presence indicators are optional.
& Terminal voltage, angle, current, and power embeddings with extended measurements. \\
\addlinespace[0.25em]
\textbf{Reactor}\newline bus--bus
& Switch, fuse, and recloser indicators; phase count; optional resistance and reactance.
& -- \\
\addlinespace[0.25em]

\textbf{Device attachments}
& Links to buses from consumers (service), substations (source), capacitors, and DERs.
& -- \\
\addlinespace[0.25em]
\textbf{Communication}\newline (optional)
& Links from substations to DT primary buses and from consumers to DT secondary buses.
& -- \\
\addlinespace[0.25em]
\textbf{Phase head links}\newline (optional)
& Links from consumers to their serving transformer's secondary bus for the phase prediction head.
& -- \\
\addlinespace[0.25em]
\textbf{Terminal attachments}
& Links from terminals to buses and transformer objects.
& -- \\
\midrule
\multicolumn{3}{@{}l}{\textbf{(c) Auxiliary stores: inputs and role}} \\
\addlinespace[0.25em]
\textbf{Transformer terminal}
& \multicolumn{2}{>{\raggedright\arraybackslash}p{\dimexpr0.82\textwidth-2\tabcolsep\relax}@{}}{Winding index; per phase voltage magnitude and angle, current magnitude, active/reactive power, and observability masks.} \\
\addlinespace[0.25em]
\textbf{Transformer object}
& \multicolumn{2}{>{\raggedright\arraybackslash}p{\dimexpr0.82\textwidth-2\tabcolsep\relax}@{}}{Aggregates encoded terminal measurements for fusion into the corresponding transformer edge representation.} \\
\addlinespace[0.25em]
\textbf{Grid / feeder head}
& \multicolumn{2}{>{\raggedright\arraybackslash}p{\dimexpr0.82\textwidth-2\tabcolsep\relax}@{}}{Reference voltage, feeder head active/reactive power, and masks; observed tap summaries and sampling cadence features.} \\
\addlinespace[0.25em]
\textbf{Power flow}
& \multicolumn{2}{>{\raggedright\arraybackslash}p{\dimexpr0.82\textwidth-2\tabcolsep\relax}@{}}{Feeder head current magnitude and mask in the extended measurement path.} \\
\end{longtable}
\endgroup

For entity $u$, let $\mathbf{x}_{u,t}$ denote its measurement vector
and $\mathbf{m}_{u,t}\in\{0,1\}^{d_u}$ its observability mask.
The model receives transformed measurements together with their masks:
\begin{equation}
\widetilde{\mathbf{x}}_{u,t}
=
\left[
\operatorname{where}
\left(
\mathbf{m}_{u,t},
\psi_u(\mathbf{x}_{u,t}),
\mathbf{0}
\right)
\;\Vert\;
\mathbf{m}_{u,t}
\right],
\qquad t=1,\ldots,T,
\end{equation}
where $\psi_u$ denotes the configured channelwise transformation
and $\Vert$ denotes concatenation.
Masks reflect sensor placement and measurement availability.
Channels sharing a measurement source may share a mask.
Unavailable values are zero filled after transformation.
These observation masks are separate from target validity and
task applicability masks used for supervision.

Optional electrical distance encodings supplement the listed
attributes with topology derived position information.
The nominal angle transformation is described in
Appendix~\ref{app:nominal_angle_reference}, and the positional encoding in
Appendix~\ref{app:electrical_position}.

For static attributes $\mathbf{s}_u$ and transformed, masked inputs
$\widetilde{\mathbf{x}}_{u,t}$, type specific node encoders produce
\begin{equation}
\mathbf{H}^{(0)}_u
= E^{\mathrm{node}}_{\tau(u)}
\left(\mathbf{s}_u,\widetilde{\mathbf{x}}_{u,1:T}\right)
\in\mathbb{R}^{T\times d}.
\end{equation}
Analogous edge encoders process physical branch inputs. Parameters are shared
across entities of the same type and across networks. Connectivity only
relations have no separate attribute encoder. Static features are broadcast
across the window, while measurements and masks retain their time indices.

Observation masks identify available inputs. Supervision masks identify valid
targets and applicable tasks. An unobserved voltage can therefore remain a
supervised target when its ground truth is available during training.

\subsection{Communication Edge Construction}
\label{app:communication_graph}

The communication graph augments the electrical graph with two relation types. Uplinks connect the substation to each distinct distribution transformer primary bus. Downlinks connect a customer to the secondary bus of its serving transformer. Both relations support bidirectional message passing and have parameters distinct from the physical relations.

For a customer with a valid transformer association, the augmented graph contains the path
\begin{equation}
\text{substation}
\leftrightarrow\text{primary bus}
\leftrightarrow\text{secondary bus}
\leftrightarrow\text{customer}.
\end{equation}
The middle connection is the physical transformer relation. The first and last connections are shortcuts. Thus, the customer lies within three message passing hops of the feeder head, irrespective of the length of the corresponding electrical route. The shortcuts change information exchange within the model while preserving the physical graph. Their construction uses topology and registry information without adding measurement channels.

\subsection{Learning with Communication Edges}\label{app:communication_theory}
\paragraph{Model and observations.}
Let $N\ge1$ and let buses $0,1,\ldots,N$ form a tree rooted at the fixed voltage source.
Number parents before children and index each branch by its child.
Let $d(i,j)$ count physical branches between buses $i$ and $j$.
All expectations are over $s\sim\mathcal D$, including operating conditions
and measurement noise. Snapshot arguments are suppressed below.
In this appendix, $\mathbf p,\mathbf q$ denote the ground truth injections
in (\ref{eq:communication_lindistflow}), using a net-injection sign convention.
At each bus, active power supplied to the grid contributes positively to $p_i$,
while active power consumption contributes negatively. Likewise, reactive
power injection contributes positively to $q_i$ and absorption negatively. Set
$\delta\mathbf p=\mathbf p-\mathbb{E}[\mathbf p]$, $\delta\mathbf q=\mathbf q-\mathbb{E}[\mathbf q]$,
and $\mathbf{y}=\mathbf V^{\odot2}-\mathbb{E}[\mathbf V^{\odot2}]$.
Since $V_0$ is fixed, subtracting the expectation of (\ref{eq:communication_lindistflow}) gives
$\mathbf{y}=2\mathbf R\delta\mathbf p+2\mathbf X\delta\mathbf q$.
For $\mathbf{z}=(\delta\mathbf p^{\mathsf T},\delta\mathbf q^{\mathsf T})^{\mathsf T}$, assume
$\boldsymbol{\Sigma}=\mathbb{E}[\mathbf{z}\mathbf{z}^{\mathsf T}]\succ0$. The centered measurements are
$\widetilde{\mathbf{z}}=\mathbf{z}+\boldsymbol{\varepsilon}
=(\widetilde{\mathbf{z}}_p^{\mathsf T},\widetilde{\mathbf{z}}_q^{\mathsf T})^{\mathsf T}$,
where $\widetilde{\mathbf{z}}_p,\widetilde{\mathbf{z}}_q\in\mathbb{R}^N$ are its active and reactive
measurement blocks.
The sensor error is independent of $\mathbf{z}$ and satisfies
$\mathbb{E}\boldsymbol{\varepsilon}=\mathbf{0}$ and $\mathbb{E}[\boldsymbol{\varepsilon}\boldsymbol{\varepsilon}^{\mathsf T}]=\tau\boldsymbol{\Sigma}$,
so $\mathbb{E}[\widetilde{\mathbf{z}}\widetilde{\mathbf{z}}^{\mathsf T}]=(1+\tau)\boldsymbol{\Sigma}$.
Each bus receives only its own noisy injection components, with no global
measurement aggregates. Voltage targets are noiseless.
Both injections and sensor errors may be correlated across buses.

Let $J_{ij}=1$ when $j$ is the nonroot parent of $i$, and zero otherwise.
The reduced branch--bus incidence matrix, oriented positively at the
parent, is $\mathbf{A}=\mathbf{J}-\mathbf{I}$. Let the physical depth be $h=\max_i d(0,i)$. Then
$\mathbf{J}^h=\mathbf{0}$ and
\begin{equation}
 \mathbf{T}=-\mathbf{A}^{-1}=\sum_{k=0}^{h-1}\mathbf{J}^k,\qquad
 \mathbf R=\mathbf{T}\operatorname{diag}(\mathbf{r})\mathbf{T}^{\mathsf T},\qquad \mathbf X=\mathbf{T}\operatorname{diag}(\mathbf{x})\mathbf{T}^{\mathsf T}.
 \label{eq:communication_path_factorization}
\end{equation}
The inverse identity follows by multiplying the sum by $\mathbf{I}-\mathbf{J}$.
These factorizations yield the radial LinDistFlow relation
(\ref{eq:communication_lindistflow}) \citep{kekatos2015fast,deka2024learning}, assumed exact here.
The vectors $\mathbf{r},\mathbf{x}$ collect branch resistances $r_e>0$ and reactances $x_e\ge0$.
Direct multiplication gives
$R_{ij}=\sum_{e\in\mathcal{P}_i\cap\mathcal{P}_j}r_e$ and
$X_{ij}=\sum_{e\in\mathcal{P}_i\cap\mathcal{P}_j}x_e$, where $\mathcal{P}_i$ is the root path
of bus $i$. An injection at $j$ affects squared voltage at $i$ when
$R_{ij}$ or $X_{ij}$ is nonzero.

\paragraph{Learner and communication.}
The model learns unconstrained branch coefficients $\mathbf{w}=(\mathbf{w}_r^{\mathsf T},\mathbf{w}_x^{\mathsf T})^{\mathsf T}$:
\begin{equation}
 \widehat{\mathbf{y}}_{\mathbf{w}}
 =\mathbf{T}\bigl[\operatorname{diag}(\mathbf{w}_r)\mathbf{T}^{\mathsf T}\widetilde{\mathbf{z}}_p
       +\operatorname{diag}(\mathbf{w}_x)\mathbf{T}^{\mathsf T}\widetilde{\mathbf{z}}_q\bigr]
 =\boldsymbol{\Phi}(\widetilde{\mathbf{z}})\mathbf{w},
 \label{eq:communication_linear_learner}
\end{equation}
For any $\mathbf a=(\mathbf a_p^{\mathsf T},\mathbf a_q^{\mathsf T})^{\mathsf T}$,
$\boldsymbol{\Phi}(\mathbf a)=\mathbf T[\,\operatorname{diag}(\mathbf T^{\mathsf T}\mathbf a_p)\;\;\operatorname{diag}(\mathbf T^{\mathsf T}\mathbf a_q)\,]$,
which is linear in its argument. The physical coefficients
$\mathbf{w}_*=(2\mathbf{r}^{\mathsf T},2\mathbf{x}^{\mathsf T})^{\mathsf T}$ satisfy $\mathbf{y}=\boldsymbol{\Phi}(\mathbf{z})\mathbf{w}_*$.
The factor two comes from squared voltage magnitudes.
Coefficients are shared across snapshots
and affected outputs, but remain branch specific.
The main text prediction is
$\widehat{\mathbf V^{\odot2}}_{\mathbf{w}}=\mathbb{E}[\mathbf V^{\odot2}]+\widehat{\mathbf{y}}_{\mathbf{w}}$,
an affine map of the raw injections for fixed weights.

Set $K=\lceil\log_2 h\rceil$. Include bidirectional links to every
nonroot ancestor at physical distances $2^k$, $0\le k<K$. Then
\begin{equation}
 \prod_{k=0}^{K-1}(\mathbf{I}+\mathbf{J}^{2^k})
 =\sum_{j=0}^{2^K-1}\mathbf{J}^j=\mathbf{T}.
 \label{eq:communication_doubling}
\end{equation}
Each exponent occurs once by binary expansion, and powers at least $h$
vanish. The factors commute. Applying their transposes to
$[\,\widetilde{\mathbf{z}}_p\;\;\widetilde{\mathbf{z}}_q\,]$
computes observed subtree injection sums in $K$ upward rounds. Bus $e$ multiplies
the two sums by $w_{r,e},w_{x,e}$ and adds them. Applying the original
factors accumulates these contributions along root paths in $K$ downward
rounds, giving (\ref{eq:communication_linear_learner}). Messages have two coordinates upward
and one downward, and all aggregation weights are fixed at one.
For $h=1$, the products are empty and communication is unnecessary.

\begin{proof}[Proof of Proposition~\ref{prop:communication_learning}]
Centering (\ref{eq:communication_loss}) gives
$\mathcal L(\mathbf{w})=\tfrac12\mathbb{E}\left\lVert \widehat{\mathbf{y}}_{\mathbf{w}}-\mathbf{y}\right\rVert^2$.
Write $\mathbf{M}_*=[\,2\mathbf R\;\;2\mathbf X\,]$, so $\mathbf{y}=\mathbf{M}_*\mathbf{z}$, and put
$\alpha=(1+\tau)^{-1}$. Define the residual
$\boldsymbol{\xi}=\mathbf{y}-\alpha \mathbf{M}_*\widetilde{\mathbf{z}}
=\mathbf{M}_*[(1-\alpha)\mathbf{z}-\alpha\boldsymbol{\varepsilon}]$.
The noise assumptions give
\begin{align}
 \mathbb{E}[\boldsymbol{\xi}\widetilde{\mathbf{z}}^{\mathsf T}]
 &=\mathbf{M}_*[(1-\alpha)\boldsymbol{\Sigma}-\alpha\tau\boldsymbol{\Sigma}]=\mathbf{0}, \label{eq:communication_noise_orthogonality}\\
 \nu_\tau:=\mathbb{E}\left\lVert \boldsymbol{\xi}\right\rVert^2
 &=\bigl[(1-\alpha)^2+\alpha^2\tau\bigr]\operatorname{tr}(\mathbf{M}_*\boldsymbol{\Sigma} \mathbf{M}_*^{\mathsf T})
  =(1-\alpha)\mathcal{E}_{\rm H}(0). \label{eq:communication_noise_floor}
\end{align}
The last equality uses
$\mathcal{E}_{\rm H}(0)=\mathbb{E}\left\lVert \mathbf{y}\right\rVert^2=\operatorname{tr}(\mathbf{M}_*\boldsymbol{\Sigma} \mathbf{M}_*^{\mathsf T})$.
For any linear predictor $\mathbf{F}\widetilde{\mathbf{z}}$, expand its error as
$(\mathbf{F}-\alpha \mathbf{M}_*)\widetilde{\mathbf{z}}-\boldsymbol{\xi}$.
The cross term vanishes by (\ref{eq:communication_noise_orthogonality}), yielding
\begin{equation}
 \mathbb{E}\left\lVert \mathbf{F}\widetilde{\mathbf{z}}-\mathbf{y}\right\rVert^2
 =\nu_\tau+(1+\tau)\operatorname{tr}\bigl[(\mathbf{F}-\alpha \mathbf{M}_*)\boldsymbol{\Sigma}(\mathbf{F}-\alpha \mathbf{M}_*)^{\mathsf T}\bigr].
 \label{eq:communication_error_decomposition}
\end{equation}
Thus $\nu_\tau$ is the minimum error among unrestricted linear predictors.
Since $\alpha \mathbf{M}_*$ is realized by $\mathbf{w}_\tau=\alpha \mathbf{w}_*$, the same minimum
is attainable by our learner. No Gaussian assumption is used.

Let $\mathbf{H}_0=\mathbb{E}[\boldsymbol{\Phi}(\mathbf{z})^{\mathsf T}\boldsymbol{\Phi}(\mathbf{z})]$ and
$\mathbf{H}_\tau=\mathbb{E}[\boldsymbol{\Phi}(\widetilde{\mathbf{z}})^{\mathsf T}\boldsymbol{\Phi}(\widetilde{\mathbf{z}})]$.
Each entry of $\boldsymbol{\Phi}$ is linear in its argument, so zero cross covariance
between $\mathbf{z},\boldsymbol{\varepsilon}$ and the proportional noise covariance give
$\mathbf{H}_\tau=(1+\tau)\mathbf{H}_0$. Equation~(\ref{eq:communication_error_decomposition}) implies
\begin{equation}
 \mathcal L(\mathbf w)=\frac{\nu_\tau}{2}
 +\frac12(\mathbf w-\mathbf w_\tau)^{\mathsf T}
      \mathbf H_\tau(\mathbf w-\mathbf w_\tau).
 \label{eq:communication_quadratic_loss}
\end{equation}
For $\mathbf u=(\mathbf u_r^{\mathsf T},\mathbf u_x^{\mathsf T})^{\mathsf T}$, set
$\mathbf M(\mathbf u)=[\,\mathbf T\operatorname{diag}(\mathbf u_r)\mathbf T^{\mathsf T}\;\;
\mathbf T\operatorname{diag}(\mathbf u_x)\mathbf T^{\mathsf T}\,]$.
Then $\mathbf u^{\mathsf T}\mathbf H_\tau\mathbf u
=(1+\tau)\operatorname{tr}(\mathbf M(\mathbf u)\boldsymbol\Sigma\mathbf M(\mathbf u)^{\mathsf T})$.
Since $\mathbf T$ is invertible, $\mathbf M(\mathbf u)\ne\mathbf0$ for
$\mathbf u\ne\mathbf0$. Positive definiteness of $\boldsymbol\Sigma$ makes
this trace strictly positive. Hence $\mathbf H_\tau\succ0$ and all its
eigenvalues are positive.

For $\mathbf{e}_n=\mathbf{w}^{(n)}-\mathbf{w}_\tau$, gradient descent gives
$\mathbf{e}_n=(\mathbf{I}-\eta' \mathbf{H}_\tau)^n \mathbf{e}_0$, with $\mathbf{e}_0=-\mathbf{w}_\tau$.
Diagonalize $\mathbf{H}_\tau=\mathbf{U}\operatorname{diag}(\lambda_j)\mathbf{U}^{\mathsf T}$, where $\mathbf{U}^{\mathsf T} \mathbf{U}=\mathbf{I}$,
and set $\mathbf{c}=\mathbf{U}^{\mathsf T} \mathbf{e}_0$. Then
\begin{equation}
 \mathcal E_{\rm H}(n)-\nu_\tau
 =\sum_j\lambda_j(1-\eta'\lambda_j)^{2n}c_j^2
 =:Q_n.
 \label{eq:communication_gd_error}
\end{equation}
This is an equality for the population loss, with no bound on the Hessian
spectrum substituted. At initialization,
$Q_0=\mathbf e_0^{\mathsf T}\mathbf H_\tau\mathbf e_0
=\mathcal E_{\rm H}(0)-\nu_\tau=\alpha\mathcal E_{\rm H}(0)$.
For $0<\eta'<2/\max_j\lambda_j$, every factor
$|1-\eta'\lambda_j|$ is less than one, so $Q_n$ is nonincreasing and tends to zero.

For bus $i$ and integer budget $\ell\ge0$, define the accessible and
inaccessible injection coordinates
\begin{equation}
 \mathcal A_i=\{j,N+j:1\le j\le N,\ d(i,j)\le\ell\},\qquad
 \mathcal U_i=\{1,\ldots,2N\}\setminus\mathcal A_i.
 \label{eq:communication_accessible_coordinates}
\end{equation}
The two coordinates for each bus correspond to active and reactive
injections. Subscripts on $\boldsymbol\Sigma$ select the indicated rows
and columns, in a fixed order. Write
$\mathbf m_i=(\mathbf M_*)_{i,:}\in\mathbb R^{1\times2N}$ for row $i$.
Induction over communication rounds shows that every physical linear
predictor has $\mathbf F_{i,\mathcal U_i}=\mathbf0$.
For nonempty $\mathcal U_i$, set
\begin{equation}
 \mathbf C_i
 =\boldsymbol\Sigma_{\mathcal U_i\mathcal U_i}
  -\boldsymbol\Sigma_{\mathcal U_i\mathcal A_i}
   \boldsymbol\Sigma_{\mathcal A_i\mathcal A_i}^{-1}
   \boldsymbol\Sigma_{\mathcal A_i\mathcal U_i},\qquad
 C_\ell=\sum_{i:\mathcal U_i\ne\emptyset}
   \mathbf m_{i,\mathcal U_i}\mathbf C_i\mathbf m_{i,\mathcal U_i}^{\mathsf T}.
 \label{eq:communication_exact_local_floor}
\end{equation}
The principal block being inverted is positive definite because
$\boldsymbol\Sigma\succ0$ and $\mathcal A_i$ includes bus $i$.
An empty inaccessible set contributes zero.

To minimize (\ref{eq:communication_error_decomposition}) subject to locality,
let $\mathbf d_i=\mathbf F_{i,\mathcal A_i}-\alpha\mathbf m_{i,\mathcal A_i}$.
The corresponding row of $\mathbf F-\alpha\mathbf M_*$ is
$[\,\mathbf d_i,-\alpha\mathbf m_{i,\mathcal U_i}\,]$
in the $(\mathcal A_i,\mathcal U_i)$ ordering. Its quadratic contribution is
\begin{align}
 &\mathbf d_i\boldsymbol\Sigma_{\mathcal A_i\mathcal A_i}\mathbf d_i^{\mathsf T}
 -2\alpha\mathbf d_i\boldsymbol\Sigma_{\mathcal A_i\mathcal U_i}
                     \mathbf m_{i,\mathcal U_i}^{\mathsf T}
 +\alpha^2\mathbf m_{i,\mathcal U_i}
              \boldsymbol\Sigma_{\mathcal U_i\mathcal U_i}
              \mathbf m_{i,\mathcal U_i}^{\mathsf T}\nonumber\\
 &\quad=\boldsymbol\delta_i\boldsymbol\Sigma_{\mathcal A_i\mathcal A_i}
                    \boldsymbol\delta_i^{\mathsf T}
       +\alpha^2\mathbf m_{i,\mathcal U_i}\mathbf C_i
                    \mathbf m_{i,\mathcal U_i}^{\mathsf T},\qquad
 \boldsymbol\delta_i=\mathbf F_{i,\mathcal A_i}-\mathbf F^\star_{i,\mathcal A_i},
 \label{eq:communication_complete_square}
\end{align}
where
\begin{equation}
 \mathbf F^\star_{i,\mathcal A_i}
 =\alpha\left(\mathbf m_{i,\mathcal A_i}
   +\mathbf m_{i,\mathcal U_i}
      \boldsymbol\Sigma_{\mathcal U_i\mathcal A_i}
      \boldsymbol\Sigma_{\mathcal A_i\mathcal A_i}^{-1}\right),\qquad
 \mathbf F^\star_{i,\mathcal U_i}=\mathbf0.
 \label{eq:communication_optimal_local_predictor}
\end{equation}
If $\mathcal U_i$ is empty, the second term in parentheses is absent and
$\mathbf F^\star_{i,:}=\alpha\mathbf m_i$.
Using $(1+\tau)\alpha^2=\alpha$ and summing the completed squares gives
\begin{equation}
 \mathcal E_{\rm P}(\mathbf F)
 =\nu_\tau+\alpha C_\ell
 +(1+\tau)\sum_i\boldsymbol\delta_i
       \boldsymbol\Sigma_{\mathcal A_i\mathcal A_i}\boldsymbol\delta_i^{\mathsf T}.
 \label{eq:communication_physical_floor}
\end{equation}
Thus the minimum over all local linear maps is
$\mathcal E_{\rm P}^\star(\ell)=\nu_\tau+\alpha C_\ell$, attained uniquely by
$\mathbf F^\star$ on the allowed coordinates.
This minimum is also attainable with $\ell$ physical communication rounds
when message width and node specific coefficients are unrestricted.
To see this, place each node's two measurements in their own coordinates
of a $2N$-vector. In the first round, send that vector to each neighbor.
In later rounds, forward the preceding round's vectors to all neighbors
except the edge from which each arrived. On a tree, a measurement reaches
another bus exactly once, after its physical distance in rounds.
Summing the received vectors over rounds and retaining the bus's own
coordinates therefore supplies exactly $\widetilde{\mathbf z}_{\mathcal A_i}$.
All routing and aggregation are linear. Bus $i$ then applies row
$\mathbf F^\star_{i,\mathcal A_i}$. The root can relay messages with zero
initial coordinates. This argument establishes attainability in the
unrestricted comparison class, not in a prescribed fixed width or
shared weight neural architecture.

Combining (\ref{eq:communication_gd_error}) and
(\ref{eq:communication_physical_floor}) yields the exact decomposition
\begin{equation}
 \boxed{\displaystyle
 \mathcal E_{\rm P}(\mathbf F)-\mathcal E_{\rm H}(n)
 =\frac{C_\ell}{1+\tau}-Q_n
 +(1+\tau)\sum_i\boldsymbol\delta_i
       \boldsymbol\Sigma_{\mathcal A_i\mathcal A_i}\boldsymbol\delta_i^{\mathsf T}.}
 \label{eq:communication_exact_gap}
\end{equation}
The three terms describe inaccessible information, remaining GD error,
and the physical predictor's excess error above its local optimum.
Dropping only the last, nonnegative term gives a sharp lower bound
for each fixed grid, covariance, and iterate: equality holds exactly
when $\mathbf F=\mathbf F^\star$.
Setting $\mathbf F=\mathbf F^\star$ and substituting the exact expression
for $Q_n$ proves Proposition~\ref{prop:communication_learning}.
Every other physical linear predictor has at least this error gap.
Linearity is stated after centering. Restoring the
true output mean gives affine predictors of the raw observations.
Any additional constant bias contributes its squared norm to the risk.
\end{proof}

\paragraph{Why the Schur complement is the relevant covariance.}
The residual after the best linear prediction of inaccessible injections is
\begin{equation}
 \boldsymbol\zeta_i
 =\mathbf z_{\mathcal U_i}
  -\boldsymbol\Sigma_{\mathcal U_i\mathcal A_i}
   \boldsymbol\Sigma_{\mathcal A_i\mathcal A_i}^{-1}\mathbf z_{\mathcal A_i}.
 \label{eq:communication_linear_residual}
\end{equation}
Direct expansion gives
$\mathbb E[\boldsymbol\zeta_i\mathbf z_{\mathcal A_i}^{\mathsf T}]=\mathbf0$
and $\mathbb E[\boldsymbol\zeta_i\boldsymbol\zeta_i^{\mathsf T}]=\mathbf C_i$.
Orthogonality makes every other linear reconstruction add a nonnegative
quadratic error. Thus $C_\ell$ sums the residual variances after weighting
by the corresponding voltage sensitivities.
No Gaussian assumption is needed. For a general distribution,
$\mathbf C_i$ is a linear residual covariance, not necessarily a
conditional covariance.

\paragraph{Positivity and the exact step threshold.}
For a nonzero column vector $\mathbf b$ on a nonempty $\mathcal U_i$, set
$\mathbf a_*=-\boldsymbol\Sigma_{\mathcal A_i\mathcal A_i}^{-1}
\boldsymbol\Sigma_{\mathcal A_i\mathcal U_i}\mathbf b$.
Expanding the quadratic form gives
\begin{equation}
 \mathbf b^{\mathsf T}\mathbf C_i\mathbf b
 =\begin{bmatrix}\mathbf a_*\\\mathbf b\end{bmatrix}^{\mathsf T}
  \begin{bmatrix}
   \boldsymbol\Sigma_{\mathcal A_i\mathcal A_i}&\boldsymbol\Sigma_{\mathcal A_i\mathcal U_i}\\
   \boldsymbol\Sigma_{\mathcal U_i\mathcal A_i}&\boldsymbol\Sigma_{\mathcal U_i\mathcal U_i}
  \end{bmatrix}
  \begin{bmatrix}\mathbf a_*\\\mathbf b\end{bmatrix}>0.
 \label{eq:communication_schur_positive}
\end{equation}
Thus $\mathbf C_i\succ0$, so $C_\ell>0$ exactly when some buses $i,j$
more than $\ell$ physical hops apart satisfy $R_{ij}^2+X_{ij}^2>0$.
Increasing $\ell$ enlarges the feasible predictor class, so $C_\ell$
cannot increase. Choosing the zero predictor in the noiseless problem
gives $0\le C_\ell\le\mathcal E_{\rm H}(0)$.

For $C_\ell>0$, the exact first GD step that beats every physical linear
predictor is
\begin{equation}
 n_\sharp=\min\{n\in\mathbb N_0:Q_n<C_\ell/(1+\tau)\}.
 \label{eq:communication_exact_threshold}
\end{equation}
Since $Q_n$ is nonincreasing and tends to zero under the stated step size condition,
$n_\sharp$ is finite, and every later iterate also satisfies the strict
comparison. If $C_\ell=0$, an optimal physical predictor already attains
$\nu_\tau$, so no hierarchical linear predictor can strictly outperform it.

\paragraph{Effect of the noise level.}
The base step size $\eta$ is constant across GD iterations.
Since $\mathbf H_\tau=(1+\tau)\mathbf H_0$ and
$\mathbf w_\tau=\alpha\mathbf w_*$, the choice
$\eta'=\eta/(1+\tau)$ gives
$\mathbf I-\eta'\mathbf H_\tau=\mathbf I-\eta\mathbf H_0$.
With zero initialization, the noisy parameter error is therefore
$\alpha$ times the noiseless error at every iteration.
Its squared Hessian norm satisfies $Q_n(\tau)=\alpha Q_n(0)$.
Because $C_\ell$ is independent of $\tau$, the exact optimal comparator
error gap in (\ref{eq:communication_prediction_gap}) equals its noiseless value
divided by $1+\tau$. Thus the exact crossing time is unchanged by
$\tau$ under this matched step scaling. The common noise floor still
increases with $\tau$.

\paragraph{Resistance uncertainty.}
Keep topology, injection covariance, and noise level fixed. Let
$\boldsymbol\kappa=(\mathbf r^{\mathsf T},\mathbf x^{\mathsf T})^{\mathsf T}$
range over the compact box $\mathcal B$ defined by
$r_e\in[\underline r_e,\overline r_e]$ and
$x_e\in[\underline x_e,\overline x_e]$, with
$\underline r_e>0$ and $\underline x_e\ge0$.
The Hessian $\mathbf H_\tau$ and its eigenbasis remain fixed, while
$C_\ell$ and the initial error coordinates depend on $\boldsymbol\kappa$.
For a fixed admissible step size, Proposition~\ref{prop:communication_learning}
gives the exact worst-case gap
\begin{equation}
 \Gamma_{\ell,n}
 =\min_{\boldsymbol\kappa\in\mathcal B}
   \left\{\frac{C_\ell(\boldsymbol\kappa)}{1+\tau}
             -Q_n(\boldsymbol\kappa)\right\}.
 \label{eq:communication_uniform_exact_gap}
\end{equation}
Continuity and compactness ensure that the minimum is attained.
Thus $\Gamma_{\ell,n}>0$ holds exactly when the hierarchical learner
strictly outperforms every physical linear predictor at every impedance
assignment in the box. The learners are retrained from zero for each
assignment. Both terms are quadratic in $\boldsymbol\kappa$: the voltage
sensitivities and initial parameter error are linear in the impedances,
while the residual covariances and Hessian are fixed.
Their difference need not be convex, so this exact uniform comparison
is a potentially nonconvex quadratic minimization over the box.
Substituting only the lower impedance endpoints is not generally valid.

\paragraph{Scope.}
The result requires the prescribed links in both directions and noisy
injection coverage at every bus. It concerns population loss under exact
LinDistFlow, with clean voltage targets.
For a general noise covariance $\boldsymbol{\Omega}$, the best linear response is
$\mathbf{M}_*\boldsymbol{\Sigma}(\boldsymbol{\Sigma}+\boldsymbol{\Omega})^{-1}$, obtained from the normal equation
$\mathbf{F}(\boldsymbol{\Sigma}+\boldsymbol{\Omega})=\mathbf{M}_*\boldsymbol{\Sigma}$.
It need not retain the learner's path factorization.
Thus the proportional covariance assumption is substantive.
The floor $\nu_\tau$ is optimal among linear predictors, not
necessarily among nonlinear predictors. Gaussianity is not assumed.
For finite samples the same proof requires the corresponding empirical
moment identities, not merely a positive definite empirical covariance.
It does not by itself establish a test error guarantee. The exact physical optimum uses population covariances and unrestricted local linear coefficients.
If physical communication already implements (\ref{eq:communication_linear_learner}),
the two implementations have identical losses and gradient iterates.

\subsection{Nominal Angle Reference}
\label{app:nominal_angle_reference}

We separate static phase conventions from operating angle deviations using a nominal reference $\theta^{\mathrm{nom}}_{i,p}$ for each (bus, phase) pair. The reference is derived from network topology, source configuration, and transformer metadata, without using simulated voltage trajectories. A validity mask $r_{i,p}$ identifies pairs with a defined reference.
Source phase angles initialize propagation through the network. Lines and reactors preserve phase identity with zero nominal shift, leaving angle drops that depend on operating conditions to the residual prediction. Transformer mappings follow winding conductor order and connection metadata:
\begin{itemize}
    \item Matching winding connection types assign zero shift to corresponding conductor positions.
    \item Mixed delta-wye connections with three matched conductor positions introduce a shift of $\pm\pi/6$, determined by the configured lead/lag convention.
    \item Single-conductor winding mappings assign zero shift to matching positions and $\pi$ to opposite positions, accounting for polarity and center-tapped secondary legs.
\end{itemize}
Mappings include only phases present on the branch and its endpoint buses. Unsupported mappings are omitted, and conflicting mappings for the same phase pair are discarded. Reverse traversal inverts the phase mapping and negates its shift.
All angles below are in radians, with
\begin{equation}
\operatorname{wrap}(x)=((x+\pi)\bmod 2\pi)-\pi.
\label{eq:app_wrap}
\end{equation}
For valid (bus, phase) pairs, observed angles and targets use the same residual coordinates. The state head predicts $\widehat{\Delta\theta}_{i,p,t}$ and recovers absolute angles through
\begin{equation}
\begin{aligned}
\Delta\theta_{i,p,t}
&=\operatorname{wrap}\left(\theta_{i,p,t}-\theta^{\mathrm{nom}}_{i,p}\right),\\
\widehat{\theta}_{i,p,t}
&=\operatorname{wrap}\left(\theta^{\mathrm{nom}}_{i,p}
+\widehat{\Delta\theta}_{i,p,t}\right).
\end{aligned}
\label{eq:app_angle_residual}
\end{equation}
Because prediction and target share the reference,
\begin{equation}
\operatorname{wrap}(\widehat{\theta}_{i,p,t}-\theta_{i,p,t})
=\operatorname{wrap}(\widehat{\Delta\theta}_{i,p,t}-\Delta\theta_{i,p,t}).
\end{equation}
This parameterization removes static phase offsets from the regression target while preserving circular angle error. Figure~\ref{fig:nominal_angle_reference} illustrates the nominal references and corresponding operating angle residuals. Appendix~\ref{app:se_readout} specifies the state decoder.

\begin{figure}[t]
\centering
\includegraphics[width=\linewidth]{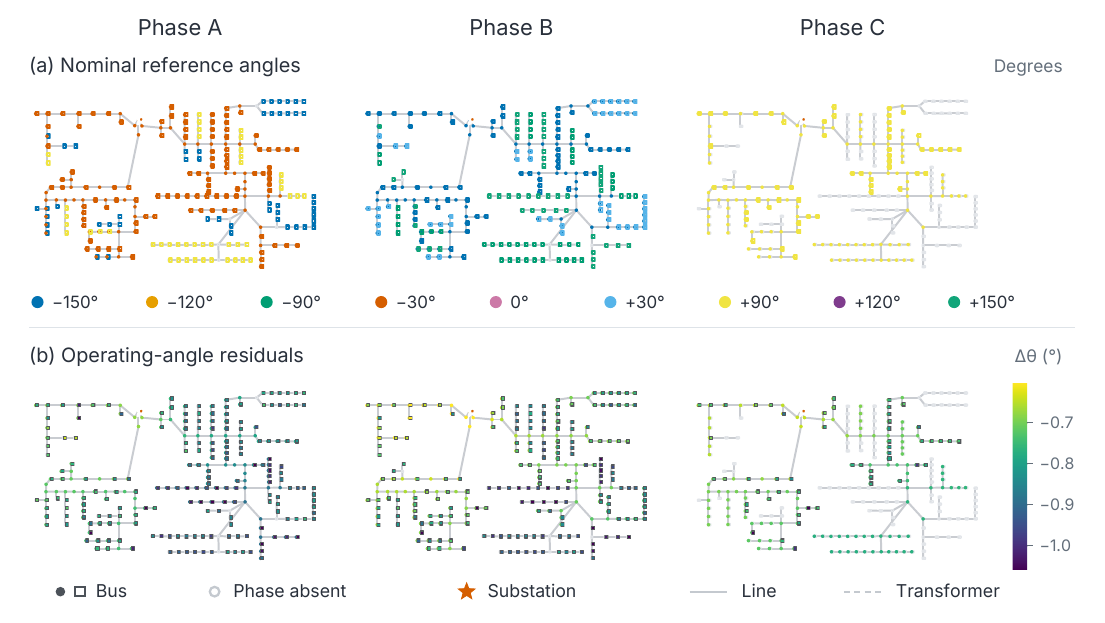}
\caption{Nominal angle parameterization on an example network. Columns show phases A, B, and C. (a) Nominal references $\theta^{\mathrm{nom}}$ encode source and transformer phase conventions. (b) Residuals $\Delta\theta=\operatorname{wrap}(\theta-\theta^{\mathrm{nom}})$ capture operating angle deviations. Nominal values use categorical colors, and residuals share a continuous scale. Light gray markers indicate absent phases, and solid and dashed edges denote lines and transformers, respectively. Angles are shown in degrees.}
\label{fig:nominal_angle_reference}
\end{figure}

\subsection{Electrical Distance Positional Encoding}
\label{app:electrical_position}

We construct positional descriptors on the full structural graph
of lines, transformers, and reactors. All switches are treated as
closed, independently of their operational state. The descriptors
therefore do not use switch state labels and are computed once
per network and physical topology variant.

For each branch $e$, resistance and reactance are converted to
per unit values using a fixed $1$~MVA system base and the local
voltage base. Transformer impedances use the primary side base.
Line impedances are obtained from scalar sequence style summaries
of their phase impedance matrices.
The Dijkstra edge weight is
\begin{equation}
w_e=\sqrt{(r_e^{\mathrm{pu}})^2+(x_e^{\mathrm{pu}})^2}.
\end{equation}
For bus $i$, the selected substation rooted path is
\begin{equation}
\mathcal{P}_i
=
\arg\min_{\mathcal{P}:s\leadsto i}
\sum_{e\in\mathcal{P}}w_e.
\end{equation}
Along this path, we record the hop count $\ell_i$, accumulated
resistance $R_i$, accumulated reactance $X_i$, transformer count
$n_i^{\mathrm{tr}}$, cumulative phase shift $\psi_i$, and number
of nonzero shift transformer crossings $n_i^{\mathrm{shift}}$.
Transformer shifts respect traversal direction. A transformer
contributes a phase shift only when its resolved phase mappings
agree on a single value. Otherwise its scalar contribution is zero.

The nine-dimensional descriptor is
\begin{equation}
\begin{aligned}
\mathbf{a}_i^{\mathrm{pos}}=\big[&
\ell_i/D_{\max},\;\log(1+\ell_i),\;
\log(1+R_i/R_0),\;\log(1+X_i/X_0),\\
&\log(1+n_i^{\mathrm{tr}}),\;
\log(V_i^{\mathrm{base}}/V_s^{\mathrm{base}}),\;
\sin\psi_i,\;\cos\psi_i,\;
\log(1+n_i^{\mathrm{shift}})\big]^{\!\top}.
\end{aligned}
\end{equation}
We use corpus calibrated scales
$R_0=0.66$, $X_0=0.54$, and $D_{\max}=57$.
Unreachable buses receive zero valued raw path statistics.

A shared MLP and LayerNorm map the descriptor to
\begin{equation}
\mathbf{p}_i
=
\operatorname{LN}
\left(\operatorname{MLP}_{\mathrm{PE}}(\mathbf{a}_i^{\mathrm{pos}})\right)
\in\mathbb{R}^{16}.
\end{equation}
This embedding is concatenated with each bus's static attributes
and masked measurement inputs before the bus encoder:
\begin{equation}
\mathbf{h}_{i,t}^{(0)}
=
\operatorname{Enc}_{\mathrm{bus}}
\left(
\left[
\mathbf{s}_i
\;\Vert\;
\widetilde{\mathbf{x}}_{i,t}
\;\Vert\;
\mathbf{p}_i
\right]
\right).
\end{equation}
The same $\mathbf{p}_i$ is used at every timestep.

\subsection{Heterogeneous Graph Transformer}
\label{app:hgt_backbone}

\subsubsection{Residual Updates}

The attention output is followed by a residual update and a
type specific feedforward sublayer:
\begin{align}
\mathbf{u}_{i,t}^{(\ell)}
&=
\mathbf{h}_{i,t}^{(\ell)}
+\operatorname{Dropout}\left(\mathbf{m}_{i,t}^{(\ell)}\right),
\\
\mathbf{h}_{i,t}^{(\ell+1)}
&=
\mathbf{u}_{i,t}^{(\ell)}
+\operatorname{Dropout}\left[
\operatorname{FFN}_{\tau(i)}^{(\ell)}
\left(
\operatorname{LN}_{\tau(i)}^{(\ell,\mathrm{ffn})}
(\mathbf{u}_{i,t}^{(\ell)})
\right)
\right].
\end{align}
Each FFN has the form
\begin{equation}
\operatorname{FFN}_{\tau}^{(\ell)}(\mathbf{z})
=
\mathbf{W}_{2,\tau}^{(\ell)}
\operatorname{Dropout}
\left(
\operatorname{GELU}
\left(\mathbf{W}_{1,\tau}^{(\ell)}\mathbf{z}+\mathbf{b}_{1,\tau}^{(\ell)}\right)
\right)
+\mathbf{b}_{2,\tau}^{(\ell)},
\end{equation}
with intermediate width $2d$.
The attention operator has no additional internal root projection
in its output, leaving a single residual pathway around each
sublayer. After the final layer, a type specific LayerNorm produces
the representations supplied to the task readouts.

\subsubsection{Preserving the Temporal Dimension}

Type specific input encoders produce
\[
\mathbf{H}_{\tau}^{(0)}
\in\mathbb{R}^{|\mathcal{V}_{\tau}|\times T\times d}.
\]
Static attributes are broadcast across the window, while measurements
and observability masks retain their timestep indices.
Every spatial layer preserves this shape:
\begin{equation}
\mathbf{H}_{:,t}^{(\ell+1)}
=
F^{(\ell)}
\left(
\mathbf{H}_{:,t}^{(\ell)},
\mathbf{E}_{:,t},
\mathcal{G}
\right),
\qquad t=1,\ldots,T.
\end{equation}
The same $F^{(\ell)}$ is applied at every timestep, with no
cross time attention or temporal pooling inside the spatial stack.
We implement these operations by batching the snapshots as disjoint
graphs and restoring the explicit time axis afterward.

Consequently, the backbone outputs a sequence of spatially informed
representations for every node. Temporal modeling and aggregation
are performed by the task specific readouts, allowing tasks to use
different summaries of the same observation window.

\begin{table}[t]
\centering
\caption{Mycelium configuration. Parameter count includes the task heads.}
\label{tab:backbone_config}
\begin{tabular}{ll}
\toprule
Component & Setting \\
\midrule
Transformer layers & 8 \\
Hidden dimension & 128 \\
Attention heads & 4 \\
Node type FFN hidden width & 256 \\
Temporal window & 24 hourly steps \\
Positional encoding & $9\rightarrow16$ dimensions \\
Normalization & Pre-LN and final LN \\
Dropout & 0.1 \\
Total parameters & Approximately $12.2$M \\
\bottomrule
\end{tabular}
\end{table}

Let $\mathbf{h}_{i,t}\in\mathbb{R}^{d}$ denote a node's final
backbone representation and $\mathbf{e}_{ij,t}\in\mathbb{R}^{d}$
an encoded branch, with $d=128$.
Each task has independently parameterized temporal readouts.

\subsubsection{Context Construction}

For branch $e=(i,j)$, the readout input combines both endpoints
and the branch embedding:
\begin{equation}
\mathbf{x}_{e,t}
=
[\mathbf{h}_{i,t}\Vert\mathbf{h}_{j,t}
\Vert\mathbf{e}_{ij,t}].
\end{equation}
The network representation combines encoded feeder head measurements
$\mathbf{u}_{\mathcal{G},t}$, mean pooled bus embeddings, and the
substation embedding:
\begin{equation}
\mathbf{g}_{\mathcal{G},t}
=
f_{\mathcal{G}}\left(
\left[
\mathbf{u}_{\mathcal{G},t}
\;\Vert\;
\operatorname{mean}_{i\in\mathcal{V}_{\mathrm{bus}}}
\mathbf{h}_{i,t}
\;\Vert\;
\mathbf{h}_{s,t}
\right]\right).
\end{equation}
For consumer $c$ attached to bus $b(c)$, we use
\begin{equation}
\mathbf{x}_{c,t}
=
[\mathbf{h}_{c,t}\Vert\mathbf{h}_{b(c),t}
\Vert\mathbf{g}_{\mathcal{G},t}].
\end{equation}

\subsection{Task Readout}
\label{app:task_readouts}
\subsubsection{State Estimation}
\label{app:se_readout}

The state estimation readout applies four residual blocks of causal dilated temporal convolutions, each with kernel size three and dilation
$\delta_b\in\{1,2,4,8\}$:
\begin{align}
\mathbf{z}_{i,t}^{(0)}
&=\mathbf{h}_{i,t},\\
\mathbf{z}_{i,t}^{(b+1)}
&=
\operatorname{LN}_b\left(
\mathbf{z}_{i,t}^{(b)}
+
\operatorname{Dropout}\left(
\operatorname{SiLU}\left(
\mathbf{b}_b+
\sum_{q=0}^{2}
\mathbf{W}_{b,q}\mathbf{z}_{i,t-q\delta_b}^{(b)}
\right)\right)\right).
\end{align}
Left zero padding preserves sequence length and ensures that the
readout accesses only current and preceding backbone representations.
This establishes causality of the temporal refiner. End-to-end causality also
depends on preprocessing and input encoding.

A shared linear map produces three magnitude and three angle residual
outputs per bus and timestep:
\begin{equation}
[\mathbf{a}_{i,t}\Vert\mathbf{b}_{i,t}]
=
\mathbf{W}_{\mathrm{SE}}\mathbf{z}_{i,t}^{(4)}
+\mathbf{b}_{\mathrm{SE}},
\qquad
\mathbf{a}_{i,t},\mathbf{b}_{i,t}\in\mathbb{R}^{3}.
\end{equation}
The outputs are converted to per unit voltage magnitudes and
deviations from the nominal angles defined in Appendix~\ref{app:nominal_angle_reference}:
\begin{equation}
\widehat{|V|}_{i,p,t}^{\mathrm{pu}}
=
1+\frac{a_{i,p,t}}{100},
\qquad
\widehat{\Delta\theta}_{i,p,t}
=
\pi\tanh(b_{i,p,t}).
\end{equation}
We recover absolute angles by adding the nominal reference,
\begin{equation}
\widehat{\theta}_{i,p,t}
=
\operatorname{wrap}
\left(
\theta_{i,p}^{\mathrm{nom}}
+
\widehat{\Delta\theta}_{i,p,t}
\right).
\end{equation}
Observed angle inputs and angle targets are likewise expressed as
wrapped deviations from $\theta_{i,p}^{\mathrm{nom}}$.

For a qualifying center-tapped secondary, the angle of one leg is
derived from the other:
\begin{equation}
\widehat{\theta}_{i,\mathrm{derived},t}
=
\operatorname{wrap}
\left(
\widehat{\theta}_{i,\mathrm{reference},t}+\pi
\right).
\end{equation}
Here, ``reference'' denotes the other physical leg. Validity masks
restrict supervision to existing (bus, phase) targets with a defined
nominal angle reference.

\subsubsection{Phase Attribution}

Let $m_{c,t}$ indicate availability of the consumer's
phase attribution voltage observation. Temporal attention uses
this indicator as an additional scoring feature:
\begin{equation}
\beta_{c,t}^{\mathrm{PA}}
=
\operatorname{softmax}_{t}
\left(
a_{\mathrm{PA}}([\mathbf{x}_{c,t}\Vert m_{c,t}])
\right).
\end{equation}
The readout pools projected values and predicts a physical phase:
\begin{align}
\mathbf{z}_{c}^{\mathrm{PA}}
&=
\sum_{t=1}^{T}
\beta_{c,t}^{\mathrm{PA}}f_{\mathrm{PA}}(\mathbf{x}_{c,t}),\\
\mathbf{p}_{c}^{\mathrm{PA}}
&=
\operatorname{softmax}
\left(g_{\mathrm{PA}}(\mathbf{z}_{c}^{\mathrm{PA}})\right).
\end{align}
The output classes are $A,B,C$.
All timesteps participate in the softmax, including those without
a local voltage observation, because spatial message passing can
supply information from other entities.

\subsubsection{Switch state Prediction}

For each line, $m_{e,t}$ indicates whether any phase current
measurement is available. A separate attention scorer produces
\begin{equation}
\beta_{e,t}^{\mathrm{SW}}
=
\operatorname{softmax}_{t}
\left(
a_{\mathrm{SW}}([\mathbf{x}_{e,t}\Vert m_{e,t}])
\right).
\end{equation}
We fuse three complementary summaries:
\begin{equation}
\mathbf{z}_{e}^{\mathrm{SW}}
=
f_{\mathrm{SW}}\left(
\left[
\sum_{t=1}^{T}\beta_{e,t}^{\mathrm{SW}}\mathbf{x}_{e,t}
\;\Vert\;
\frac{1}{T}\sum_{t=1}^{T}\mathbf{x}_{e,t}
\;\Vert\;
\operatorname{max}_{t}\mathbf{x}_{e,t}
\right]\right).
\end{equation}
The maximum is elementwise, and all three summaries use the
complete sequence. The open state probability is
\begin{equation}
p_e^{\mathrm{open}}
=
\sigma\left(g_{\mathrm{SW}}(\mathbf{z}_{e}^{\mathrm{SW}})\right).
\end{equation}
One label is predicted for the window, with supervision restricted
to switch bearing lines.

\subsubsection{Fault Diagnosis}

Fault windows place the event at timestep $T$.
For an input sequence $\mathbf{x}_{1:T}$, the readout constructs
an attended history:
\begin{align}
\gamma_t
&=
\operatorname{softmax}_{t<T}
\left(a_{\mathrm{hist}}(\mathbf{x}_t+\mathbf{p}_t)\right),\\
\mathbf{r}
&=
\sum_{t=1}^{T-1}\gamma_t\mathbf{x}_t,
\end{align}
where $\mathbf{p}_t$ is a learned history position embedding.
It then compares the endpoint with this history:
\begin{equation}
\mathcal{R}(\mathbf{x}_{1:T})
=
f_{\mathrm{event}}\left(
[\mathbf{x}_T\Vert\mathbf{r}
\Vert(\mathbf{x}_T-\mathbf{r})
\Vert|\mathbf{x}_T-\mathbf{r}|]
\right).
\end{equation}

The candidate set is
$\mathcal{C}(\mathcal{G})=
\mathcal{V}_{\mathrm{bus}}\cup
\mathcal{E}_{\mathrm{line}}\cup
\mathcal{E}_{\mathrm{transformer}}$.
Bus candidates use their node sequences directly.
Line and transformer contexts share an affine projection into
the same $d$-dimensional space:
\begin{equation}
\mathbf{q}_{c,t}
=
\begin{cases}
\mathbf{h}_{c,t},
& c\in\mathcal{V}_{\mathrm{bus}},\\
\mathbf{W}_{\mathrm{edge}}\mathbf{x}_{c,t}+\mathbf{b}_{\mathrm{edge}},
& c\in\mathcal{E}_{\mathrm{line}}
\cup\mathcal{E}_{\mathrm{transformer}}.
\end{cases}
\end{equation}
A shared candidate readout and a separately parameterized network
readout produce
\begin{equation}
\mathbf{z}_{c}^{\mathrm{F}}
=
\mathcal{R}_{\mathrm{cand}}(\mathbf{q}_{c,1:T}),
\qquad
\mathbf{z}_{\mathcal{G}}^{\mathrm{F}}
=
\mathcal{R}_{\mathrm{graph}}
(\mathbf{g}^{\mathrm{F}}_{\mathcal{G},1:T}).
\end{equation}
The fault specific network sequence is constructed with whole window
grid summaries, including tap statistics, supplied only to its
final timestep.

All candidate types use the same location scorer:
\begin{equation}
s_c
=
g_{\mathrm{loc}}
([\mathbf{z}_{c}^{\mathrm{F}}
\Vert\mathbf{z}_{\mathcal{G}}^{\mathrm{F}}]),
\qquad
\widehat{c}
=
\arg\max_{c\in\mathcal{C}(\mathcal{G})}s_c.
\end{equation}
The winning candidate determines both equipment type and location.
Because the scorer is shared, its output set follows the current network
rather than a fixed vocabulary of training locations.
Localization supervision applies to applicable faulted samples.

Fault classification combines the network readout with
elementwise maximum evidence across all candidate types:
\begin{equation}
\mathbf{p}^{\mathrm{F}}
=
\operatorname{softmax}\left(
g_{\mathrm{cls}}\left(
\left[
\mathbf{z}_{\mathcal{G}}^{\mathrm{F}}
\;\Vert\;
\operatorname{max}_{c\in\mathcal{C}(\mathcal{G})}
\mathbf{z}_{c}^{\mathrm{F}}
\right]\right)\right).
\end{equation}
The six classes are normal operation, line-to-ground (LG), line-to-line
(LL), double-line-to-ground (LLG), three-phase (LLL), and three-phase-to-ground
(LLLG) faults. Fault detection follows from the distinction between the normal
class and the five fault classes.
The network representation already incorporates mean pooled bus
context, while the candidate maximum retains localized evidence.

The phase, switch, fault class, and location heads each use a
128-dimensional hidden layer with SiLU activation and dropout
$0.1$, followed by an output linear layer. The state decoder is
a single linear layer following the causal temporal refiner.

\subsection{Multi-task Objective and Supervision}
\label{app:multitask_objective}

The joint objective uses unit coefficients for the four task losses:
\begin{equation}
\mathcal{L}
=\mathcal{L}_{\mathrm{SE}}+\mathcal{L}_{\mathrm{PA}}
+\mathcal{L}_{\mathrm{SW}}+\mathcal{L}_{\mathrm{F}}.
\end{equation}
Here, SE, PA, SW, and F denote the state estimation, phase attribution,
switch state inference, and fault diagnosis losses, respectively. Unit coefficients apply after each task's internal scaling and reduction.
They do not imply equal loss magnitudes or gradient contributions.

\paragraph{Reduction over valid targets}
For an entity level task $q$, let $M_{g,u}^{q}$ indicate whether target $u$
in network window $g$ is supervised, and let $\mathcal{B}_q$ contain windows
with at least one supervised target. We use
\begin{equation}
\operatorname{Avg}_{q}(\ell)
=
\frac{1}{|\mathcal{B}_{q}|}
\sum_{g\in\mathcal{B}_{q}}
\frac{\sum_u M_{g,u}^{q}\ell_{g,u}}
     {\sum_u M_{g,u}^{q}}.
\end{equation}

Thus, each applicable network window contributes its mean valid target loss,
preventing larger networks from receiving greater weight solely because they
contain more supervised entities. Windows without valid targets are omitted
from the corresponding average.

\paragraph{State estimation}
Over valid (bus, timestep, phase) targets, we minimize
\begin{equation}
\mathcal{L}_{\mathrm{SE}}
=
\frac{1}{2}\operatorname{Avg}_{\mathrm{SE}}
\left[
\rho\!\left(\lambda_V(\widehat{|V|}^{\mathrm{pu}}-|V|^{\mathrm{pu}})\right)
+
\rho\!\left(\lambda_\theta
\operatorname{wrap}(\widehat{\theta}-\theta)\right)
\right].
\end{equation}
Here $\rho$ is Smooth-$L_1$ with unit transition threshold. Angles are in
radians, and scaling precedes application of $\rho$.
The constants $\lambda_V$ and $\lambda_\theta$ control the magnitude and
angle residual scales, respectively.

\paragraph{Phase attribution and switch state}
Phase attribution uses three-class cross entropy on structurally eligible
consumers with at least one observed phase voltage reading in the window:
\begin{equation}
\mathcal{L}_{\mathrm{PA}}
=
\operatorname{Avg}_{\mathrm{PA}}[-\log p_c(y_c)].
\end{equation}
Switch state prediction uses unweighted binary cross entropy on
switch bearing lines, with $y_e=1$ for OPEN:
\begin{equation}
\mathcal{L}_{\mathrm{SW}}
=
\operatorname{Avg}_{\mathrm{SW}}
[-y_e\log p_e-(1-y_e)\log(1-p_e)].
\end{equation}

\paragraph{Fault diagnosis}
The classification head predicts normal operation or one of five fault types.
With inverse frequency weights $w_k\propto f_k^{-1}$, where $f_k$ is the
configured training corpus frequency, its loss is
\begin{equation}
\mathcal{L}_{\mathrm{cls}}
=
-\frac{\sum_{g\in\mathcal{B}}w_{y_g}\log p_g(y_g)}
       {\sum_{g\in\mathcal{B}}w_{y_g}}.
\end{equation}
For faulted windows with valid location targets $\mathcal{B}_{\mathrm{F}}$,
localization applies softmax
cross entropy over the bus, line, and transformer candidates
$\mathcal{C}(g)$:
\begin{equation}
\mathcal{L}_{\mathrm{loc}}
=
-\frac{1}{|\mathcal{B}_{\mathrm{F}}|}
\sum_{g\in\mathcal{B}_{\mathrm{F}}}
\log
\frac{\exp(s_{g,c_g^\star})}
     {\sum_{c\in\mathcal{C}(g)}\exp(s_{g,c})}.
\end{equation}
We combine the two components when the batch contains a valid fault location target:
\begin{equation}
\mathcal{L}_{\mathrm{F}}
=
\begin{cases}
\frac{1}{2}(\mathcal{L}_{\mathrm{cls}}+\mathcal{L}_{\mathrm{loc}}),
& |\mathcal{B}_{\mathrm{F}}|>0,\\
\mathcal{L}_{\mathrm{cls}}, & \text{otherwise}.
\end{cases}
\end{equation}

\section{Sensor Policies}
\label{app:sensor_policies}

Every electrical case is rendered under 7 named sensor policies. A policy
fixes (i) \emph{which} positions carry sensors (structural placement, drawn
once per network per policy and persisted), and (ii) the noise and missingness
applied to the resulting readings at load time (redrawn from persisted
per sample seeds, so every sample is reproducible).

\paragraph{Coverage levels.}
Coverage fractions are calibrated to utility practice: The six degraded policies apply three fixed tiers under two placement strategies, while \texttt{clean} is the fully observed,
noiseless reference.

\begin{table}[h]
\centering
\begin{tabular}{lccccc}
\toprule
Tier & Bus voltage & DT meter (joint $V{+}I$) & Branch flow & Customer $P$/$Q$ & of which $Q$ \\
\midrule
sparse & 1\% & 60\% & 0.5\% & 10\% & 20\% \\
medium & 5\% & 90\% & 2\% & 30\% & 20\% \\
dense & 15\% & 100\% & 5\% & 70\% & 20\% \\
\texttt{clean} & 100\% & 100\% & 100\% & 100\% & 100\% \\
\bottomrule
\end{tabular}
\caption{Structural coverage per tier. A ``DT meter'' is one physical device:
its terminal bus voltage and transformer edge current channels are always
granted together, never sampled independently. Customer coverage is a binary
AMI/unobserved split. Among AMI customers, reactive power is reported by the
stated fraction. The substation reference voltage and feeder head $P$/$Q$ are
observed under every policy.}
\end{table}

\paragraph{Placement algorithms.}
The \texttt{noisy\_missing\_\{sparse,medium,dense\}} policies place sensors
\emph{uniformly at random} within each eligible pool (plain buses, DT
terminal buses, branches, customers) at the tier's target count. The
\texttt{observability\_ami\_\{sparse,medium,dense\}} policies use the same
target counts but place voltage and branch sensors by {greedy
farthest point ($k$-center) selection on the network's hop distance graph}:
sensors are added iteratively at the bus currently at maximum hop distance
from its nearest existing sensor, minimizing the worst-case
distance to the nearest sensor rather than allowing random clustering. The DT
pool is filled first and the plain bus pass is seeded from it. Customer AMI
assignment is {clustered by serving transformer}: entire DT service
areas receive AMI together, modeling area wide utility rollout programs
rather than an independent coin flip per meter.

\paragraph{Noise and missingness.}
For all policies except \texttt{clean}, each observed reading receives
multiplicative Gaussian noise $x \mapsto x\,(1+\varepsilon)$,
$\varepsilon\!\sim\!\mathcal{N}(0,\sigma^2)$, with $\sigma$ per channel:
voltage magnitude $0.2\%$, voltage angle $0.02^\circ$ (additive), $P$/$Q$
injections $2\%$, currents $1\%$. Regulator tap readback is exact. Each
reading is independently dropped with per channel probability: voltage
$1\%$, injections $5\%$, currents $5\%$. \texttt{clean} sets all noise and
missingness to zero.

\section{Further Experiment Results}
\label{sec:further-res}

\subsection{State Estimation}

\begin{equation}
\arg\min_{\mathbf{x}_t}
\sum_{m\in\mathcal{O}t}
\frac{\left[z_{m,t}-h_m(\mathbf{x}_t;\mathcal{G}_t)\right]^2}
{\sigma_m^2},
\end{equation}

where $\mathcal{O}_t$ is the set of available measurements, $h_m$ is the measurement function, and $\sigma_m$ is the corresponding noise scale. We evaluate voltage magnitudes and angles on the same valid (bus, phase, timestep) entries used for the learned estimators.

\begin{table*}[t]
\centering
\caption{State estimation errors and per-case runtimes for WLS and Mycelium across benchmark networks and sensor policies. Each result is voltage magnitude MAE in volts (per unit) / voltage angle MAE in degrees. $\dagger$ marks WLS divergence, and $\ddagger$ marks urban cases in which WLS did not converge and returned its initialization.}
\label{tab:wls_gnn}
\small
\setlength{\tabcolsep}{4pt}
\renewcommand{\arraystretch}{1.05}
\begin{tabular}{llrrrr}
\toprule
Network & Policy & \multicolumn{2}{c}{WLS} & \multicolumn{2}{c}{Ours} \\
\cmidrule(lr){3-4}\cmidrule(lr){5-6}
& & Result & Time & Result & Time \\
\midrule

\multirow{6}{*}{240\_bus}
% & clean       & 11.4 (.0029) / \textbf{.23} & 18s & \textbf{14.2 (.0025)} / .82 & .4s \\
& nm\_dense   & 34.0 (.0091) / .63 & 7s & \textbf{10.3 (.0022) / .67} & .4s \\
& nm\_medium  & \textbf{11.4 (.0029) / .23} & 7s & 13.8 (.0026) / .81 & .4s \\
& nm\_sparse  & 18.8 (.0049) / \textbf{.35} & 6s & \textbf{13.0 (.0028)} / .80 & .4s \\
& ami\_dense  & 21.6 (.0057) / \textbf{.40} & 7s & \textbf{12.4 (.0024)} / .76 & .4s \\
& ami\_medium & 34.0 (.0091) / .63 & 8s & \textbf{9.9 (.0022) / .65} & .4s \\
& ami\_sparse  & 78.3 (.0149) / \textbf{.62} & 5s & \textbf{10.4 (.0023)} / .66 & .4s \\
\midrule

\multirow{6}{*}{296\_bus}
% & clean       & 58.7 (.0164) / 1.10 & 20s & \textbf{11.1 (.0020) / .61} & .4s \\
& nm\_dense   & 66.1 (.0183) / 1.23 & 10s & \textbf{11.2 (.0020) / .62} & .4s \\
& nm\_medium  & 58.7 (.0164) / 1.10 & 9s & \textbf{10.1 (.0022) / .58} & .4s \\
& nm\_sparse  & 36.1 (.157$^\dagger$) / 4.08$^\dagger$ & 13s
& \textbf{11.0 (.0024) / .67} & .4s \\
& ami\_dense  & 65.7 (.0180) / 1.23 & 9s & \textbf{10.5 (.0020) / .63} & .4s \\
& ami\_medium & 66.1 (.0183) / 1.23 & 8s & \textbf{11.0 (.0021) / .62} & .4s \\
& ami\_sparse  & 68.4 (.0188) / 1.27 & 7s & \textbf{10.4 (.0020) / .65} & .4s \\
\midrule

\multirow{6}{*}{industrial}
% & clean       & 14.7 (.0031) / \textbf{.01} & 8.0m
% & \textbf{9.9} (.0040) / .83 & 1.0s \\
& nm\_dense   & \textbf{19.6} (.0091) / \textbf{.81} & 2.2m
& 20.7 (\textbf{.0056}) / 1.09 & 1.0s \\
& nm\_medium  & \textbf{19.8 (.0085) / .84} & 1.9m
& 38.6 (.0082) / 1.15 & 1.0s \\
& nm\_sparse  & \textbf{12.0 (.0044) / .81} & 9.7m
& 53.7 (.0129) / 1.39 & 1.0s \\
& ami\_dense  & \textbf{5.8 (.0024) / .18} & 5.7m
& 41.5 (.0077) / 1.27 & 1.0s \\
& ami\_medium & \textbf{7.9 (.0041) / .28} & 3.8m
& 49.0 (.0091) / 1.28 & 1.0s \\
& ami\_sparse  & 66.5 (.0114) / 2.77 & 10.1m
& \textbf{57.6 (.0109) / 1.32} & 1.0s \\
\midrule

\multirow{6}{*}{rural}
% & clean       & 3.72 (\textbf{.0045}) / \textbf{.54} & 8.4h
% & \textbf{1.62} (.0058) / .88 & 4.5s \\
& nm\_dense   & \textbf{3.82 (.0049) / .59} & 4.0h
& 4.01 (.0056) / 1.26 & 4.5s \\
& nm\_medium  & \textbf{3.72 (.0045) / .54} & 2.3h
& 4.09 (.0060) / 1.37 & 4.5s \\
& nm\_sparse  & \textbf{3.12 (.0019) / .20} & 1.4h
& 4.37 (.0072) / 1.43 & 4.5s \\
& ami\_dense  & 3.82 (.0049) / \textbf{.59} & 4.0h
& \textbf{1.29 (.0044)} / 1.26 & 4.5s \\
& ami\_medium & 3.82 (.0049) / \textbf{.60} & 2.3h
& \textbf{1.43 (.0049)} / 1.36 & 4.5s \\
& ami\_sparse  & 3.82 (.0049) / \textbf{.59} & 1.3h
& \textbf{1.90} (.0070) / 1.45 & 4.5s \\
\midrule

\multirow{6}{*}{urban}
% & clean$^\ddagger$       & 556.1 (.0517) / 4.59 & 11.9h
% & \textbf{64.4 (.0152) / 4.64} & 18.9s \\
& nm\_dense$^\ddagger$   & 556.4 (.0530) / 4.67 & 6.2h
& \textbf{471.6 (.0453) / 1.97} & 18.9s \\
& nm\_medium$^\ddagger$  & 556.1 (.0517) / 4.59 & 4.2h
& \textbf{543.1 (.0504) / 1.45} & 18.9s \\
& nm\_sparse$^\ddagger$  & 556.4 (.0530) / 4.67 & 2.9h
& 557.6 (.0551) / \textbf{1.50} & 18.9s \\
& ami\_dense$^\ddagger$  & 556.1 (.0517) / 4.59 & 6.1h
& \textbf{72.4 (.0115) / 4.53} & 18.9s \\
& ami\_medium$^\ddagger$ & 556.4 (.0530) / 4.67 & 4.0h
& \textbf{68.8 (.0169)} / 4.71 & 18.9s \\
& ami\_sparse$^\ddagger$ & 556.1 (.0517) / 4.59 & 3.1h
& \textbf{451.1 (.0472) / 2.16} & 18.9s \\
\bottomrule
\end{tabular}
\end{table*}

WLS receives the exact per phase network model, operational switch states, timestep specific tap positions, measurement mapping, and simulator noise scales. It also uses known zero injection constraints and a substation angle reference. It is therefore an oracle-assisted reference, rather than a baseline operating under Mycelium's information constraints. Table~\ref{tab:wls_gnn} reports results and per-case runtimes across the five frozen benchmark networks. WLS performs well in several industrial and rural settings, whereas Mycelium obtains lower errors in many of the Iowa network cases. WLS does not converge on the urban network. Its reported urban predictions are the solver initialization and should not be interpreted as converged WLS estimates.

\paragraph{Privileged Information Required by WLS. }Our WLS baseline receives simulator derived information beyond the measurements available to Mycelium:

\begin{enumerate}
\item \textbf{Exact network model:} per phase connectivity, impedances, transformers, regulators, shunts, and low voltage secondaries. In a 240-bus diagnostic, an erroneous regulator shunt term increased voltage error from approximately \(11\) to \(105\mathrm{V}\).
\item \textbf{True operating topology:} the actual switch states used to construct the admittance matrix.
\item \textbf{True tap positions:} transformer and regulator taps at every timestep, including those used for initialization.
\item \textbf{Correct mappings and zero injections:} bus and phase assignments for measurements, plus known zero injection buses imposed as hard Hachtel constraints.
\item \textbf{True noise scales:} weights derived from the measurement noise distributions used by the simulator.
\item \textbf{Substation angle reference:} an available source angle at every timestep, including under sparse sensor policies.
\end{enumerate}

These assumptions make WLS an oracle-assisted benchmark: its results reflect state estimation with exact topology, device settings, phase mapping, and noise statistics. Mycelium receives less operating information and must infer some of these quantities from sparse measurements. Performance comparisons should therefore be interpreted in light of these different inputs.

Table~\ref{tab:se_per_policy} compares Mycelium and CANOS under each sensor policy. Mycelium has lower per-unit magnitude error under five of the seven policies and lower angle error under six, while CANOS has lower magnitude error in volts under every policy. Because networks have different voltage bases, the per-unit and volt metrics can rank models differently. A breakdown by voltage level would be needed to identify the source of this reversal.

\begin{table*}[t]
\centering
\caption{Per policy state estimation on the benchmark (all five networks,
with fault, 24 timesteps).}
\label{tab:se_per_policy}
\small
\setlength{\tabcolsep}{4pt}
\renewcommand{\arraystretch}{1.05}
\begin{tabular}{lrrrrrrrr}
\toprule
& \multicolumn{4}{c}{CANOS} & \multicolumn{4}{c}{Ours} \\
\cmidrule(lr){2-5}\cmidrule(lr){6-9}
Policy & MAE (V) & MAE (p.u.) & MAE ($^\circ$) & MAPE
       & MAE (V) & MAE (p.u.) & MAE ($^\circ$) & MAPE \\
\midrule
clean       & \textbf{1.98} & \textbf{.0017} & \textbf{.63} & \textbf{0.35\%} & 18.71 & .0079 & 1.77 & 5.43\% \\
nm\_dense   & \textbf{6.21} & \textbf{.0100} & 3.83 & \textbf{4.97\%} & 116.36 & .0145 & \textbf{1.37} & 5.31\% \\
nm\_medium  & \textbf{10.24} & .0205 & 4.15 & 6.28\% & 135.07 & \textbf{.0166} & \textbf{1.32} & \textbf{5.38\%} \\
nm\_sparse  & \textbf{12.96} & .0276 & 3.66 & 7.31\% & 140.04 & \textbf{.0186} & \textbf{1.36} & \textbf{5.58\%} \\
ami\_dense  & \textbf{8.74} & .0113 & 3.53 & 5.73\% & 22.08 & \textbf{.0063} & \textbf{1.99} & \textbf{5.25\%} \\
ami\_medium & \textbf{12.17} & .0230 & 3.50 & 6.85\% & 23.48 & \textbf{.0080} & \textbf{2.09} & \textbf{5.40\%} \\
ami\_sparse & \textbf{14.21} & .0288 & 3.47 & 7.65\% & 113.26 & \textbf{.0167} & \textbf{1.56} & \textbf{5.63\%} \\
\midrule
\textbf{Aggregate} & \textbf{9.50} & .0175 & 3.25 & 5.59\% & 81.28 & \textbf{.0127} & \textbf{1.64} & \textbf{5.42\%} \\
\bottomrule
\end{tabular}
\end{table*}

\subsection{Phase Attribution}
\label{app:knn_phase}

We also compare Mycelium with a correlation-based phase matcher. For each eligible customer, the matcher compares its 24-step meter profile with the profiles for the three phases at its serving transformer and assigns the phase with the highest correlation. We evaluate variants using transformer active power ($P_{\mathrm{xfmr}}$) and active and reactive power jointly ($P{+}Q_{\mathrm{xfmr}}$). All methods are scored on the same eligible customer predictions.

Table~\ref{tab:knn_agg} shows that the 20,000-bus Mycelium model exceeds both reported power profile variants on the test split and frozen benchmark. The 10,000-bus model closely matches the $P_{\mathrm{xfmr}}$ variant numerically (0.5713 versus 0.5741 on test and 0.4001 versus 0.3999 on the benchmark), but trails the $P{+}Q_{\mathrm{xfmr}}$ variant on the benchmark (0.4001 versus 0.4200). The two matching variants also reverse rank across splits, showing that their relative performance depends on the evaluation distribution and choice of signals.

\begin{table}[h]
\centering
\caption{Pooled phase attribution accuracy. Populations are matched to the
models': $n{=}1{,}315{,}577$ on the benchmark, $n{=}2{,}862$ on the test split.}
\label{tab:knn_agg}
\begin{tabular}{lcc}
\toprule
\textbf{Method} & \textbf{Benchmark} & \textbf{Test} \\
\midrule
correlation-based ($P_{\mathrm{xfmr}}$)     & 0.3999 & 0.5741 \\
correlation-based ($P{+}Q_{\mathrm{xfmr}}$) & 0.4200 & 0.3637 \\
\midrule
Mycelium 10k                       & 0.4001 & 0.5713 \\
Mycelium 20k                             & \textbf{0.4387} & \textbf{0.6150} \\
\bottomrule
\end{tabular}
\end{table}

No single number characterizes the baseline. The two variants swap rank between splits -- $P{+}Q$ is stronger on the benchmark, $P$ on the test split. We report
both rather than select one, since choosing a single variant would move the comparison by more than $0.2$ in either direction.

\subsection{Fault Localization}
\label{app:fault_localization}

We compare Mycelium with STGATv2 on the frozen benchmark, reporting
results by sensor policy and network. Localization is evaluated on
faulted samples over each network's full candidate set of buses,
lines, and transformers. Loc.\ denotes exact localization accuracy,
Equip.\ denotes equipment type accuracy, and Hop@$k$ measures
localization within $k$ hops. Class reports fault class accuracy.

Table~\ref{tab:fault_by_policy} shows that Mycelium achieves higher
exact localization accuracy under six sensor policies and ties
STGATv2 under dense AMI sensing. It also improves equipment type
accuracy and all hop-based metrics under every policy. However,
performance remains sensitive to observation quality: Mycelium's
exact localization accuracy falls from 0.500 under clean measurements
to 0.102 under sparse, noisy, and missing measurements.

\begin{table}[t]
\centering
\caption{Fault diagnosis by sensor policy on the frozen benchmark,
with 108 faulted samples per policy for localization.
Det.\ F1 is the fault versus normal detection F1 obtained by collapsing the
fault class matrix. Class$_5$ is the 5-way type accuracy among genuinely
faulted samples. Higher is better for all metrics. Best comparable values
within each policy are bold, including ties.}
\label{tab:fault_by_policy}
\resizebox{0.7\linewidth}{!}{%
\begin{tabular}{llcccccc}
\toprule
Policy & Model & Det.\ F1 & class$_5$ & Loc. & Hop@1 & Hop@2 & Hop@3 \\
\midrule
\multirow{2}{*}{clean}
& STGATv2 & - & - & 0.213 & 0.250 & 0.269 & 0.306 \\
& Mycelium & 1.000 & 0.704 & \textbf{0.500}
& \textbf{0.537} & \textbf{0.583} & \textbf{0.583} \\
\midrule
\multirow{2}{*}{nm\_dense}
& STGATv2 & - & - & 0.130 & 0.213 & 0.222 & 0.306 \\
& Mycelium & 1.000 & 0.704 & \textbf{0.185}
& \textbf{0.315} & \textbf{0.370} & \textbf{0.435} \\
\midrule
\multirow{2}{*}{nm\_medium}
& STGATv2 & - & - & 0.130 & 0.231 & 0.241 & 0.296 \\
& Mycelium & 1.000 & 0.769 & \textbf{0.185}
& \textbf{0.324} & \textbf{0.361} & \textbf{0.417} \\
\midrule
\multirow{2}{*}{nm\_sparse}
& STGATv2 & - & - & 0.065 & 0.157 & 0.157 & 0.222 \\
& Mycelium & 1.000 & 0.769 & \textbf{0.102}
& \textbf{0.259} & \textbf{0.296} & \textbf{0.361} \\
\midrule
\multirow{2}{*}{ami\_dense}
& STGATv2 & - & - & \textbf{0.111} & 0.194 & 0.194 & 0.259 \\
& Mycelium & 1.000 & 0.722 & \textbf{0.111}
& \textbf{0.213} & \textbf{0.259} & \textbf{0.324} \\
\midrule
\multirow{2}{*}{ami\_medium}
& STGATv2 & - & - & 0.102 & 0.176 & 0.185 & 0.241 \\
& Mycelium & 1.000 & 0.732 & \textbf{0.148}
& \textbf{0.315} & \textbf{0.343} & \textbf{0.407} \\
\midrule
\multirow{2}{*}{ami\_sparse}
& STGATv2 & - & - & 0.074 & 0.139 & 0.185 & 0.250 \\
& Mycelium & 1.000 & 0.648 & \textbf{0.139}
& \textbf{0.259} & \textbf{0.278} & \textbf{0.352} \\
\bottomrule
\end{tabular}%
}
\end{table}

The network-level breakdown in Table~\ref{tab:fault_by_network}
reveals substantial variation across networks. Mycelium improves
exact localization on both Iowa networks and the industrial and
rural networks. STGATv2 performs better on the urban network,
with exact accuracy of 0.222 versus 0.079, and retains a small
advantage in rural Hop@1 and Hop@2. Thus, the aggregate gains
do not imply uniformly better localization across networks.

% \begin{table}[t]
% \centering
% \caption{Fault diagnosis by network on the frozen benchmark,
% pooled across sensor policies. $n$ counts faulted samples used
% for localization. Higher is better for all metrics. Best
% comparable values within each network are bold.}
% \label{tab:fault_by_network}
% \resizebox{0.7\linewidth}{!}{%
% \begin{tabular}{llrcccccc}
% \toprule
% Network & Model & $n$ & Class & Loc. & Equip. & Hop@1 & Hop@2 & Hop@3 \\
% \midrule
% \multirow{2}{*}{240\_bus}
% & STGATv2 & 189 & - & 0.159 & 0.460 & 0.159 & 0.159 & 0.159 \\
% & Mycelium & 189 & 0.757 & \textbf{0.312} & \textbf{0.524}
% & \textbf{0.455} & \textbf{0.481} & \textbf{0.545} \\
% \midrule
% \multirow{2}{*}{296\_bus}
% & STGATv2 & 189 & - & 0.005 & 0.333 & 0.090 & 0.138 & 0.201 \\
% & Mycelium & 189 & 0.846 & \textbf{0.116} & \textbf{0.407}
% & \textbf{0.270} & \textbf{0.307} & \textbf{0.307} \\
% \midrule
% \multirow{2}{*}{industrial}
% & STGATv2 & 126 & - & 0.127 & 0.452 & 0.127 & 0.127 & 0.254 \\
% & Mycelium & 126 & 0.842 & \textbf{0.214} & \textbf{0.730}
% & \textbf{0.238} & \textbf{0.310} & \textbf{0.381} \\
% \midrule
% \multirow{2}{*}{rural}
% & STGATv2 & 189 & - & 0.148 & 0.349
% & \textbf{0.339} & \textbf{0.344} & 0.439 \\
% & Mycelium & 189 & 0.842 & \textbf{0.185} & \textbf{0.423}
% & 0.317 & 0.333 & \textbf{0.444} \\
% \midrule
% \multirow{2}{*}{urban}
% & STGATv2 & 63 & - & \textbf{0.222} & 0.317
% & \textbf{0.317} & \textbf{0.317} & \textbf{0.317} \\
% & Mycelium & 63 & 0.486 & 0.079 & \textbf{0.397}
% & 0.206 & 0.286 & 0.286 \\
% \bottomrule
% \end{tabular}%
% }
% \end{table}

\begin{table}[t]
\centering
\caption{Fault diagnosis by network on the frozen benchmark,
pooled across sensor policies. $n$ counts faulted samples used
for localization. Det.\ F1 is the fault versus normal detection F1 obtained by
collapsing the fault class matrix. Class$_5$ is the 5-way type accuracy among
genuinely faulted samples. Higher is better for all metrics. Best
comparable values within each network are bold.}
\label{tab:fault_by_network}
\resizebox{0.7\linewidth}{!}{%
\begin{tabular}{llrcccccc}
\toprule
Network & Model & $n$ & Det.\ F1 & class$_5$ & Loc. & Hop@1 & Hop@2 & Hop@3 \\
\midrule
\multirow{2}{*}{240\_bus}
& STGATv2 & 189 & - & - & 0.159 & 0.159 & 0.159 & 0.159 \\
& Mycelium & 189 & 1.000 & 0.667 & \textbf{0.312}
& \textbf{0.455} & \textbf{0.481} & \textbf{0.545} \\
\midrule
\multirow{2}{*}{296\_bus}
& STGATv2 & 189 & - & - & 0.005 & 0.090 & 0.138 & 0.201 \\
& Mycelium & 189 & 1.000 & 0.788 & \textbf{0.116}
& \textbf{0.270} & \textbf{0.307} & \textbf{0.307} \\
\midrule
\multirow{2}{*}{industrial}
& STGATv2 & 126 & - & - & 0.127 & 0.127 & 0.127 & 0.254 \\
& Mycelium & 126 & 1.000 & 0.754 & \textbf{0.214}
& \textbf{0.238} & \textbf{0.310} & \textbf{0.381} \\
\midrule
\multirow{2}{*}{rural}
& STGATv2 & 189 & - & - & 0.148
& \textbf{0.339} & \textbf{0.344} & 0.439 \\
& Mycelium & 189 & 1.000 & 0.783 & \textbf{0.185}
& 0.317 & 0.333 & \textbf{0.444} \\
\midrule
\multirow{2}{*}{urban}
& STGATv2 & 63 & - & - & \textbf{0.222}
& \textbf{0.317} & \textbf{0.317} & \textbf{0.317} \\
& Mycelium & 63 & 1.000 & 0.429 & 0.079
& 0.206 & 0.286 & 0.286 \\
\bottomrule
\end{tabular}%
}
\end{table}

\subsection{Per-network Benchmark Results}
\label{app:per-network}

Table~\ref{tab:tenk_final_per_network} reveals variation obscured by pooled metrics. The urban network has the largest per-unit and absolute voltage errors, while the industrial network has a MAPE of 60.92\%, compared with less than 1.5\% on every other benchmark network. This raises network macro MAPE to 12.86\%, versus 5.43\% pooled. Exact fault localization accuracy ranges from 0.0794 on urban to 0.3122 on 240\_bus. Phase attribution is evaluated only on rural and urban, where accuracy is 0.4898 and 0.3782, respectively.

The industrial MAPE is highly sensitive to near-zero targets. Although buses without a valid voltage base are excluded, some de-energized buses retain a positive base and enter the metric with true magnitudes as low as $1.6\times10^{-4}$\,p.u. Dividing absolute errors by these targets can produce extreme percentage errors. The industrial network's per-unit MAE is 0.0097. Its median and 90th percentile absolute percentage errors are 0.39\% and 0.86\%, and its MAPE after excluding the worst decile is 0.56\%. We therefore emphasize per-unit MAE for voltage magnitude and interpret MAPE cautiously.

\begin{table}[t]
\centering
\caption{Per-network performance of Mycelium 10k on the frozen benchmark. Network macro averages weight networks equally within each metric's evaluation population. Pooled metrics aggregate predictions across networks. Sample counts $n$ refer to evaluation windows. Phase attribution has eligible targets only on the rural and urban networks. Superscripts $k$ indicate the number of networks included when fewer than five contribute. All results use the same checkpoint and evaluation run.}
\label{tab:tenk_final_per_network}
\resizebox{\textwidth}{!}{%
\begin{tabular}{llcccccccccccccc}
\toprule
& & \multicolumn{4}{c}{State estimation}
& Phase & \multicolumn{3}{c}{Switch} & \multicolumn{6}{c}{Fault} \\
\cmidrule(lr){3-6}\cmidrule(lr){7-7}
\cmidrule(lr){8-10}\cmidrule(lr){11-16}
Split & Configuration
& p.u. $\downarrow$ & $v_{\mathrm{mag}}$ (V) $\downarrow$
& $v_{\mathrm{ang}}$ (deg) $\downarrow$ & MAPE (\%) $\downarrow$
& Acc. $\uparrow$ & Prec. $\uparrow$ & Rec. $\uparrow$ & F1 $\uparrow$
& Det. F1 $\uparrow$ & class$_5$ $\uparrow$ & Loc. $\uparrow$
& Hop@1 $\uparrow$ & Hop@2 $\uparrow$ & Hop@3 $\uparrow$ \\
\midrule
\multirow{7}{*}{Benchmark}
& 240\_bus ($n{=}259$)    & 0.0039 & 18.26  & 0.870 & 0.63  & --     & --     & --     & --     & 1.000 & 0.6667 & 0.3122 & 0.4550 & 0.4815 & 0.5450 \\
& 296\_bus ($n{=}259$)    & 0.0034 & 18.32  & 0.742 & 0.76  & --     & --     & --     & --     & 1.000 & 0.7884 & 0.1164 & 0.2698 & 0.3069 & 0.3069 \\
& industrial ($n{=}196$)  & 0.0097 & 40.66  & 1.213 & 60.92 & --     & 0.9974 & 0.5955 & 0.7458 & 1.000 & 0.7540 & 0.2143 & 0.2381 & 0.3095 & 0.3810 \\
& rural ($n{=}259$)       & 0.0059 & 2.70   & 1.286 & 0.59  & 0.4898 & 0.9644 & 0.9564 & 0.9604 & 1.000 & 0.7831 & 0.1852 & 0.3175 & 0.3333 & 0.4444 \\
& urban ($n{=}70$)        & 0.0342 & 318.20 & 2.971 & 1.40  & 0.3782 & 0.9777 & 0.9714 & 0.9746 & 1.000 & 0.4286 & 0.0794 & 0.2063 & 0.2857 & 0.2857 \\
\cmidrule(lr){2-16}
& Network macro           & 0.0114 & 79.63  & 1.416 & 12.86 & 0.4340\textsuperscript{$k$=2} & 0.9798\textsuperscript{$k$=3} & 0.8411\textsuperscript{$k$=3} & 0.8936\textsuperscript{$k$=3} & 1.000 & 0.6841 & 0.1815 & 0.2974 & 0.3434 & 0.3926 \\
& Pooled ($n{=}1{,}043$)  & 0.0127 & 81.28  & 1.636 & 5.43  & 0.4001 & 0.9553 & 0.7198 & 0.8210 & 1.000 & 0.7209 & 0.1958 & 0.3175 & 0.3558 & 0.4114 \\
\bottomrule
\end{tabular}}
\end{table}

\section{Training and Memory Optimization}
\label{app:training_memory}

\paragraph{Hardware and precision.}
We train on eight NVIDIA A100 GPUs, each with 80\,GB of memory, using FP32 throughout, without mixed precision.

\paragraph{Activation checkpointing.}
The backbone preserves separate hidden states for all $T=24$ timesteps across eight message passing layers, making message passing the dominant source of activation memory. We checkpoint each layer using \texttt{torch.utils.checkpoint.checkpoint} with \texttt{use\_reentrant=False} and \texttt{preserve\_rng\_state=True}. This reduces memory usage by recomputing intermediate activations as needed during backpropagation, allowing the configured per GPU batch to fit within 80\,GB.

Preserving the random number generator state reproduces dropout masks during recomputation. Checkpointing adds no parameters and does not change the training objective or state dictionary structure. It trades additional computation during backpropagation for reduced activation storage and is disabled during inference.

\end{document}